\title{\huge Self-training Converts Weak Learners to \\Strong Learners in Mixture Models}
\author
{
    Spencer Frei\thanks{Equal contribution } \thanks{Department of Statistics, University of California, Los Angeles, CA 90095, USA; e-mail: {\tt spencerfrei@ucla.edu}}
    ~~~and~~~
    Difan Zou$^*$\thanks{Department of Computer Science, University of California, Los Angeles, CA 90095, USA; e-mail: {\tt knowzou@ucla.edu}}
    ~~~and~~~
    Zixiang Chen$^*$\thanks{Department of Computer Science, University of California, Los Angeles, CA 90095, USA; e-mail: {\tt chenzx19@cs.ucla.edu}}
	~~~and~~~
	Quanquan Gu\thanks{Department of Computer Science, University of California, Los Angeles, CA 90095, USA; e-mail: {\tt qgu@cs.ucla.edu}}
}
\begin{document}
\date{}
\maketitle

\begin{abstract}
    We consider a binary classification problem when the data comes from a mixture of two rotationally symmetric distributions satisfying concentration and anti-concentration properties enjoyed by log-concave distributions among others. 
    We show that there exists a universal constant $C_{\mathrm{err}}>0$ such that if a pseudolabeler $\bbeta_{\mathrm{pl}}$ can achieve classification error at most $C_{\mathrm{err}}$, 
    then for any $\varepsilon>0$, an iterative self-training algorithm initialized at $\bbeta_0 := \bbeta_{\mathrm{pl}}$ using pseudolabels $\hat y = \mathrm{sgn}(\langle \bbeta_t, \xb\rangle)$ and using at most $\tilde O(d/\varepsilon^2)$ unlabeled examples suffices to learn the Bayes-optimal classifier up to $\varepsilon$ error, where $d$ is the ambient dimension.  That is, self-training converts weak learners to strong learners using only unlabeled examples.   We additionally show that by running gradient descent on the logistic loss one can obtain a pseudolabeler $\bbeta_{\mathrm{pl}}$ with classification error $C_{\mathrm{err}}$ using only $O(d)$ labeled examples (i.e., independent of $\varepsilon$).  Together our results imply that mixture models can be learned to within $\varepsilon$ of the Bayes-optimal accuracy using at most $O(d)$ labeled examples and $\tilde O(d/\varepsilon^2)$ unlabeled examples by way of a semi-supervised self-training algorithm.      
\end{abstract}


\section{Introduction}\label{sec:intro}
Current state-of-the-art methods for computer vision and natural language understanding have relied upon \textit{self-training} methods. 
These methods are generally unsupervised or semi-supervised learning approaches that take advantage of massive unlabeled datasets to improve performance on benchmark machine learning tasks~\citep{devlin2019bert,chen2020simclr}.  As human-annotated labeled data is expensive to collect, any approach which can reduce the number of labeled examples necessary for good performance is very desirable.  

One common approach in semi-supervised and self-supervised learning is the usage of a \textit{pseudolabeler}, which generates labels for unlabeled data $\xb$ by using the outputs of a classifier $\xb \mapsto \hat y := \sgn(f(\xb; \bbeta))$
where the pseudolabeler $f(\xb; \bbeta)$ has weights $\bbeta$ that have been pre-trained on labeled data (or a combination of labeled and unlabeled data). This approach has been remarkably successful in improving performance on image recognition tasks~\citep{pham2021metapseudolabels,rizve2021defensepseudolabel}, although there is very little theoretical understanding for why this method can improve performance or reduce the labeled sample complexity of the learning problem.

In this work, we provide algorithmic guarantees for the error of linear classifiers trained using only unlabeled samples using a standard self-training framework.   We assume the learner has access to an initial classifier $\bpl$
which could be generated in an arbitrary manner.  Given unlabeled examples $\{\xb_i\}_{i=1}^n$, initial classifier $\bbeta_0 :=\bpl$, at each time $t$ we generate pseudolabels $\hat y_i = \sgn(\sip{\bbeta_t}{\xb_i})$ that are then used in a standard gradient-based optimization of a weight-normalized loss of the form $\ell(\hat y_i \sip{\bbeta_t}{\xb_i} / \norm {\bbeta_t})$.
We assume the data is generated by a mixture model with two modes in the sense that, for labels $y\in \{\pm 1 \}$ and mean parameter $\bmu \in \R^d$, $\xb|y$ is a random variable with mean $y\bmu$ and the distribution of $\zb := \xb-y\bmu$ is unimodal, spherically symmetric, and satisfies some mild concentration and anti-concentration properties.  

Our main contributions are as follows.
\begin{enumerate}[leftmargin = *]
    \item[(1)] Provided the classification error of the initial pseudolabeler is smaller than some absolute constant $\cerr$, self-training with $\tilde O(d/\eps^2)$ unlabeled examples produces a classifier that has classification error at most $\eps$ larger than the Bayes-optimal error.
    \item[(2)] If the mixture model is sufficiently separated (i.e., $\norm \bmu \geq C_{\bmu}$ for some absolute constant $C_{\bmu}$), then in the \textit{supervised} learning setting, gradient descent on the logistic loss finds a classifier with classification error at most $\cerr$ using only $O(d)$ labeled examples---i.e., independent of $\eps$. 
    \item[(3)] Putting (1) and (2) together implies that in the semi-supervised setting, mixture models can be learned to within $\eps$ of the Bayes-optimal accuracy using $O(d)$ labeled examples and $\tilde O(d/\eps^2)$ unlabeled examples by using self-training with weight normalization.
\end{enumerate}

\paragraph{Organization of the paper.}  
We first discuss related work in Section~\ref{sec:related}.  We provide our main results on self-training with unlabeled examples in Section~\ref{sec:self.train.converts.weak.strong}.  In Section~\ref{sec:supervised}, we describe our results in the supervised setting and combine this with our results from Section~\ref{sec:self.train.converts.weak.strong} to get guarantees in the semi-supervised setting.  In Section~\ref{sec:proof}, we provide a proof sketch for our results on self-training.  We conclude in Section~\ref{sec:conclusion}, and leave detailed proofs for the appendices.  

\paragraph{Notation.}
We note here the notational conventions adopted in the paper.  We use bold letters to denote vectors.  We use $\norm {\xb}$ to denote the $\ell^2$ Euclidean norm of a vector $\xb$.  We say that $f(x) = O(g(x))$ if there exist universal constants $C, C'$ such that $f(x) \leq Cg(x)$ for $x\geq C'$; $f(x) = \Omega(g(x))$ if there exist $C,C'$ such that $f(x) \geq C g(x)$ for $x\geq C'$; and $f(x) = \Theta(g(x))$ if $f(x) = O(g(x))$ and $f(x) = \Omega(g(x))$.  We use $\tilde O, \tilde \Omega$, and $\tilde \Theta$ to additionally ignore logarithmic factors.  For a vector $\vb$, we denote by $\err(\vb) := \P_{(\xb,y)\sim \calD}(y \neq \sgn(\sip{\vb}{\xb})$, where the distribution $\calD$ will be understood from the context in which this term appears.  We use $\ind(A)$ to denote the indicator function of an event $A$, i.e. equal to one when the event $A$ occurs and zero otherwise.  The function $\sgn(z) = \ind(z > 0) - \ind(z < 0)$ is the sign function, equal to the sign of a real number with $\sgn(0)=0$.  We use the notation $a\wedge b$ to denote the minimum of $a$ and $b$, and the notation $a\vee b$ to denote the maximum of $a$ and $b$.  For a linear classifier $\xb \mapsto \sgn(\sip{\xb}{\bbeta})$ with parameter $\bbeta$, we will interchangeably refer to $\bbeta$ as the classifier or as the parameter.  We will likewise interchangeably refer to the parameters $\bpl$ defining a pseudolabeler, a pseudolabeler $\xb \mapsto \sip{\xb}{\bpl}$, and the classifier induced by a pseudolabeler $\xb \mapsto \sgn(\sip{\xb}{\bpl})$ itself, with the particular sense being clear in context. 

\section{Related Work}\label{sec:related}
Although the usage of the term `pseudolabel' dates back to as recently as 2013~\citep{lee2013pseudolabel}, the usage of self-supervised (unsupervised) methods to improve performance in supervised learning tasks has a long history in machine learning~\citep{schudder1965patternmachines,yarowsky1995unsupervised}.  It is only over the past few years that self-supervised `pre-training' methods have become a standard approach for improving performance in supervised learning tasks like image recognition and natural language understanding~\citep{devlin2019bert,chen2020simclr,pham2021metapseudolabels}.  Such methods are particularly appealing in the age of big data where in an increasing number of domains it is possible to collect massive unlabeled datasets.

From a theoretical perspective, much less is known about self-supervised and semi-supervised learning than in the supervised setting.  Early works by~\citet{castelli1995exponentialvaluelabeled,castelli1996relativevaluelabeled} looked at the relative value of labeled examples over unlabeled examples when the underlying marginal distribution of the features satisfies a parametric identifiability assumption.  A series of works have sought to clarify under what conditions semi-supervised learning can have provably better sample complexity or generalization performance in comparison with using solely supervised learning techniques~\citep{bendavid2008unlabeledprovablyhelp,singh2008unlabeleddatanowithelps,balcan2010discriminative,darnstdt2013unlabeleddatahelps,gopfert2019unlabeleddatalr}.  For surveys on early work in semi-supervised learning, we refer the reader to~\citet{zhu2009semisupintro} and~\citet{chapelle2010semisupbook}.  

More related to this work, a number of theorists have sought to better understand the mechanisms underlying the types of self-training algorithms used for deep neural networks.  This includes an analysis of contrastive learning~\citep{tosh2021contrastive}, consistency regularization~\citep{wei2021selftraining,cai2021labelpropagation}, robust self-training~\citep{raghunathan2020mitigatingtradeoffrobustness}, knowledge distillation~\citep{hsu2021generalizationdistillation} and masked feature prediction~\citep{lee2020predictingwhatyouknow}, to mention a few.  A number of works on the theory of self-training methods have focused on their applications in transfer learning and domain adaptation~\citep{kumar2020selftrainingradualdomain,chen2020selftrainingavoidspurious,xie2021innout}. 

A closely related paper is by \citet{oymak2020selftraininginsights}.  They considered the Gaussian mixture model setting and considered a self-training algorithm based on updating the estimate $\f 1 n \summ i n [y_i \xb_i] \approx \bmu$ for the mean of the mixture.  By replacing the labels $y_i$ with pseudolabels produced by some initial pseudolabeler, they are able to show that in the high dimensional limit, the predictors found by self-training are correlated with the Bayes-optimal predictor $\bmu$.  In contrast to our results, they did not provide a guarantee that their self-training algorithm converged to the Bayes-optimal predictor.  Additionally, the averaging operator they consider does not have analogues used in deep learning, which stands in contrast to the gradient-based training of the logistic loss we consider in this paper. 

\cite{kumar2020selftrainingradualdomain}, in a broad work on the usage of self-training methods for domain adaptation, worked on a similar problem to the one we consider in this paper.  They showed that in a Gaussian mixture model setting, assuming (1) iterative self-training solves an appropriate constrained nonconvex optimization problem and (2) access to infinite unlabeled data, then iterative self-training can yield the Bayes-optimal classifier provided it is initialized with a pseudolabeler with sufficiently small error.  By contrast, we directly show that the nonconvex optimization algorithm consisting of self-training with a finite set of unlabeled samples via weight-normalized gradient descent yields Bayes-optimal classifiers for a more general class of distributions. 

\cite{chen2020selftrainingavoidspurious} showed that for a mixture model where some coordinates are `spurious' and are distributed according to a (possibly anisotropic) Gaussian while the remaining coordinates satisfy mild distributional assumptions and fully determine the `signal' of the label $y$, self-training via \textit{projected} gradient descent learns to avoid the spurious features, provided the initial pseudolabeler does not depend much on the spurious features.  Under the additional assumption that all of the coordinates are Gaussian with only one coordinate determining the label, they are able to show self-training converges to an optimal classifier.  In comparison to our work, we have a more complete characterization of the sample complexity of the semi-supervised learning problem in that we show that a constant number of labeled examples is sufficient for learning pseudolabelers for which self-training learns optimal classifiers; we show convergence to the optimal classifier for more general distributions; and we consider the dynamics of self-training with \textit{weight-normalized} gradient descent, which are different from that of projected gradient descent.

\section{Self-training Converts Weak Learners to Strong Learners}\label{sec:self.train.converts.weak.strong}
In this section we show one of our key results, namely for data coming from an isotropic mixture model, there exists a universal constant $\cerr>0$ such that if an initial pseudolabeler $\bpl$ has classification error at most $\cerr$, then self-training using only unlabeled examples yields a classifier with classification error arbitrarily close to the Bayes-optimal error.  Before we begin, let us introduce some definitions which we will need to define the mixture model we consider.  Our first set of definitions are that of sub-exponential distributions and that of anti-concentration.
\begin{definition}[Sub-exponential distributions]\label{def:concentration}
We say $\calD_\xb$
is $K$\emph{-sub-exponential} if every $\xb\sim \calD_\xb$ is a sub-exponential random vector with sub-exponential norm at most $K$.  In particular, for any $\bar \vb$ with $\norm {\bar \vb}=1$, $\P_{\calD_x}(|\sip{\bar \vb}{\xb}| \geq t) \leq \exp(- t/K)$.  
\end{definition}

\begin{definition}\label{def:anticoncentration}
For $\bar \vb, \bar \vb'\in \R^d$, denote by $p_{\bar \vb}(\cdot)$ the marginal distribution of $\xb \sim \calD_\xb$
on the one dimensional subspace spanned by $\bar \vb$, and by $p_{\bar \vb,\bar \vb'}(\cdot)$ the marginal distribution on the subspace spanned by $\bar \vb$ and $\bar \vb'$.   We say the distribution satisfies \emph{$U$-anti-concentration} if there exists $U>0$ such that for any unit norm $\bar \vb\in \R^d$, $p_{\bar \vb}(t) \leq U$ for all $t\in \R$.  We say \emph{$(U',R)$-anti-anti-concentration} holds if there exists $U', R>0$ such that for any unit norm $\bar \vb, \bar \vb'\in \R^d$, it holds that $p_{\bar \vb, \bar \vb'}(\ab )\geq 1/U'$ for all $\ab \in \mathbb{R}^2$ satisfying $\norm{\ab}_2\leq R$.
\end{definition}
The sub-exponential definition is standard and satisfied by log-concave isotropic distributions among others.  Anti-concentration and anti-anti-concentration are fairly benign distributional assumptions, the former stating that the distribution cannot assign unbounded probability mass along one dimensional projections and the latter stating that the projection of the features onto low dimensional subspaces have probability density functions which assign at least a constant amount of mass near the origin.  A number of recent works have developed guarantees for learning halfspaces with noise under these distributional assumptions to avoid computational complexity lower bounds that exist without such assumptions~\citep{diakonikolas2019massart,diakonikolas2020nonconvex,frei2020halfspace,frei2021twolayerhalfspace,zoufrei2021adversarial}.  

We can now define the mixture distribution we consider in this work.

\begin{definition}\label{assumption:generative.model}
A joint distribution $\calD$ over $(\xb, y)\in \R^d \times \{\pm 1\}$ is defined as follows. Let $\bmu\in \R^d$, and $y = 1$ with probability $1/2$ and $y=-1$ with probability $1/2$. Then we generate $\xb|y \sim \zb + y\bmu$ where $\zb$ is an isotropic, rotationally symmetric\footnote{By isotropic we mean $\E[\xb]=0$ and $\E[\xb\xb^\top]=I$, and by rotationally symmetric we mean $\xb$ has the same distribution as $Q\xb$ for any orthogonal matrix $Q$.} and $K$-sub-exponential distribution satisfying $U$-anti-concentration and $(U', R)$-anti-anti-concentration.   Further assume $\zb$ is unimodal in the sense that its probability density function $p_\zb(z)$ is decreasing function of $\norm{z}_2$.  We call $(\xb, y)\sim \calD$ a \textit{mixture distribution with mean $\bmu$ and parameters $K, U, U',R$.}
\end{definition}

We note that log-concave isotropic distributions like the standard Gaussian are $K$-sub-exponential and satisfy $U$-anti-concentration and $(U', R)$-anti-anti-concentration with $K, U, U', R = \Theta(1)$~(see \citet[Fact 19]{diakonikolas2020massartstructured}).  Thus, our generative model is a natural generalization of the Gaussian mixture model that can accommodate a broader class of distributions.  We further note that the Bayes-optimal classifier for the mixture models we consider in this work is given by the linear classifier $\xb \mapsto \sgn(\sip{\bmu}{\xb})$.

\begin{fact}\label{fact:bayes.opt}
For mixture models satisfying Definition~\ref{assumption:generative.model}, the Bayes-optimal classifier is given by $\xb \mapsto \sgn(\sip{\bmu}{\xb})$. 
\end{fact}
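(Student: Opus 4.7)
The plan is to derive the Bayes-optimal decision rule directly from its definition under 0-1 loss and then simplify it using the specific structure of the mixture model. Recall that the Bayes-optimal classifier is the MAP rule $\xb \mapsto \arg\max_{y\in\{\pm 1\}} \P(y \mid \xb)$, and since the prior on $y$ is uniform over $\{\pm 1\}$, Bayes' rule reduces this to $\arg\max_y p(\xb\mid y)$, where $p(\cdot \mid y)$ is the conditional density of $\xb$ given $y$.

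Next I would compute these conditional densities using the generative model. Since $\xb \mid y = \zb + y\bmu$ and $\zb$ has density $p_\zb$, we have $p(\xb\mid y=1) = p_\zb(\xb - \bmu)$ and $p(\xb\mid y=-1) = p_\zb(\xb + \bmu)$. The MAP rule therefore predicts $+1$ precisely when $p_\zb(\xb-\bmu) \geq p_\zb(\xb+\bmu)$. Here is where the assumptions of Definition~\ref{assumption:generative.model} come in: $p_\zb$ is a decreasing function of $\|\cdot\|_2$ (unimodality plus rotational symmetry already implies this, since rotational symmetry forces $p_\zb$ to depend on its argument only through its norm). Consequently $p_\zb(\xb-\bmu) \geq p_\zb(\xb+\bmu)$ is equivalent to $\|\xb-\bmu\|_2 \leq \|\xb+\bmu\|_2$.

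Finally I would expand both squared norms: the $\|\xb\|_2^2$ and $\|\bmu\|_2^2$ terms cancel, leaving $-2\sip{\bmu}{\xb} \leq 2\sip{\bmu}{\xb}$, i.e.\ $\sip{\bmu}{\xb} \geq 0$. Thus the Bayes-optimal classifier outputs $+1$ on $\{\sip{\bmu}{\xb}\geq 0\}$ and $-1$ otherwise, which is $\sgn(\sip{\bmu}{\xb})$ (up to the measure-zero hyperplane $\{\sip{\bmu}{\xb}=0\}$, where the $U$-anti-concentration of $\zb$ ensures zero probability mass and hence ambiguity on ties is irrelevant for the Bayes error).

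There is no real obstacle here; the only subtle point is justifying the step from strict monotonicity of $p_\zb$ in $\|\cdot\|_2$ to the equivalence with $\|\xb-\bmu\|_2 \leq \|\xb+\bmu\|_2$ on the support of $\xb$, and handling the boundary $\sip{\bmu}{\xb}=0$ carefully. Both are routine: the former uses rotational symmetry of $\zb$ to reduce $p_\zb$ to a function of a single scalar, and the latter uses anti-concentration to show the hyperplane has probability zero under $\calD$.
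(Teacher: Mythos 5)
Your proposal is correct and follows essentially the same route as the paper's own proof in Appendix~\ref{appendix:bayes.opt}: reduce the Bayes rule to a likelihood comparison under the uniform prior, use rotational symmetry and unimodality to turn the density comparison into a comparison of $\|\xb-\bmu\|_2$ and $\|\xb+\bmu\|_2$, and expand the squared norms to obtain $\sgn(\sip{\bmu}{\xb})$. Your extra remark on handling ties via anti-concentration is a harmless addition not needed in the paper's argument.
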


A proof for Fact~\ref{fact:bayes.opt} is given in Appendix~\ref{appendix:bayes.opt}.  With the above in place, we can begin to describe the self-training algorithm we will use to amplify weak learners to strong learners using only unlabeled data.  We assume we have access to a pseudolabeler $\bpl$ which is able to achieve a sufficiently small, but constant, population-level classification error.  We then use a weight-normalized logistic regression method to train starting from the initial predictor $\bpl$ using only unlabeled examples.  Our results will rely upon loss functions that are well-behaved in the following sense.
\begin{definition}\label{assumption:loss}
We say a loss function $\ell$ is \textit{well-behaved} for some $C_\ell\geq 1$ if the loss $\ell(z)$ is 1-Lipschitz and decreasing on the interval $[0,\infty)$, and additionally $-\ell'(z) \geq \f 1{C_\ell} \exp(-z)$ for $z>0$.
\end{definition}
The exponential loss $\ell(z) = \exp(-z)$ and the logistic loss $\ell(z) = \log(1+\exp(-z))$ are well-behaved with $C_\ell=1$ and $C_\ell=2$ respectively.  Note that our analysis will \textit{not} require that the loss used is convex, merely that it is decreasing, Lipschitz, and that $-\ell'$ is bounded from below by a constant times the exponential loss.  Additionally note that we only specify the behavior of the loss on the interval $[0,\infty)$.  As we will see, this is because in the self-training algorithm we consider, the input to the loss function is always non-negative. 

We can now formally describe the self-training algorithm.  Let $\sigma>0$ be a parameter which we shall call the \textit{temperature}.  We assume we have access to $n = TB$ samples $\{\xit it\}_{i=1, \dots, B,\ t=0,\dots, T-1}$, which we partition into $T$ batches of size $B$.   With a well-behaved loss $\ell$, we define the (unsupervised) empirical risk
\begin{equation}\nonumber \label{eq:unsupervised.loss}
\hat {\lunsupt t} (\bbeta) := \f 1 B \summ i B \ell\l(\f 1 \sigma \cdot \sgn\l (\ip{\xit i t}{\bbeta} \r) \cdot \ip{\xit it}{\f{ \bbeta}{\norm{\bbeta}}} \r) = \f 1 B \summ i B \ell\l(\f 1 \sigma \l|\ip{\xit it}{\f{\bbeta}{\norm{\bbeta}}}\r| \r).
\end{equation}
That is, we use a typical weight-normalized logistic regression-type loss with pseudolabels given by $\hat y = \sgn(\sip{\bbeta}{\xb})$, with an additional factor given by the temperature $\sigma$.  We start with the predictor $\bbeta_0=\bpl / \norm{\bpl}$ and then use updates
\begin{align*}
\tilde \bbeta_{t+1} &= \bbeta_t - \eta \nabla \hat {\lunsupt t}(\bbeta_t),\\
\bbeta_{t+1} &= \tilde \bbeta_{t+1}/\norm{\tilde \bbeta_{t+1}}.
\end{align*}
Notice the usage of weight normalization in the definition of the unsupervised loss.  This can be viewed as a form of regularization for the learning algorithm, since if we do not normalize the weights it is possible that $\hat \lunsup(\bbeta)$ could be minimized by simply taking $\norm{\bbeta}\to \infty$.  The usage of a temperature term is common in self-training algorithms~\citep{hinton2015distillation,zou2019confidenceselftraining}, and has also previously been used for learning halfspaces with noise~\citep{diakonikolas2020nonconvex,zoufrei2021adversarial}.   We summarize the above into Algorithm~\ref{alg:selftraining}.

\begin{algorithm}[!t]
	\caption{Self-training with pseudolabels and weight normalization}
	\label{alg:selftraining}
	\begin{algorithmic}[1]
		\STATE \textbf{input:} 
		Training dataset $S = \{\xit it \}_{i=1,\dots,B,\ t=0,\dots, T-1}$, \\
		step size $\eta$, temperature $\sigma>0$, pseudolabeler $\bpl$
		\STATE $\bbeta_0 := \bpl / \norm{\bpl}$
 		\FOR {$t=0, \dots, T-1$}
 		\STATE Generate pseudolabels $\hat y_i^{(t)} = \sgn(\sip{\xit it}{\bbeta_t})$
		\STATE  $\tilde \bbeta_{t+1} = \bbeta_t - \frac{\eta}{B} \summ i B \nabla \ell(\f 1 \sigma \cdot  \hat y_i^{(t)} \cdot \sip{\xb_i^{(t)}}{ \bbeta_t / \norm{\bbeta_t}})$
		\STATE $\bbeta_{t+1} = \tilde \bbeta_{t+1}/\norm{\tilde \bbeta_{t+1}}$
		\ENDFOR 
		\STATE \textbf{output: $\bbeta_{T-1}$} 
	\end{algorithmic}
\end{algorithm}

Our main result is the following theorem.  We will present its proof in Section~\ref{sec:proof}. 
\begin{theorem}\label{thm:selftrain.unsupervised}
Suppose that $(\xb, y)\sim \calD$ follows a mixture distribution with mean $\bmu$ satisfying $\norm \bmu=\Theta(1)$ and parameters $K, U, U', R = \Theta(1)$.  Let $\ell$ be well-behaved for some $C_\ell\geq 1$, and assume the temperature satisfies $\sigma \geq R \vee \norm \bmu$.  Assume access to a pseudolabeler $\bpl$ which satisfies $\P_{(\xb, y)\sim \calD}\big(y\neq \sgn(\sip{\bpl}{\xb})\big) \leq \cerr$, where $\cerr = R^2 / (72 C_\ell U')$.   Let $\eps, \delta\in (0,1)$, and assume that
\[ B = \tilde \Omega\l(\eps^{-1}\r) , \quad T = \tilde \Omega \l( d\eps^{-1}   \r), \quad \eta = \tilde \Theta\l(d^{-1} \eps\r).\] 
\sloppy Then with probability at least $1-\delta$, by running Algorithm \ref{alg:selftraining} with step size $\eta$ and batch size $B$, the last iterate satisfies $\err(\bbeta_{T-1}) \leq \err(\bmu) + \eps$.   In particular, $T = \tilde O(d / \eps)$ iterations using at most $TB = \tilde O(d/\eps^2)$ unlabeled samples suffices to be within $\eps$ error of the Bayes-optimal classifier.
\end{theorem}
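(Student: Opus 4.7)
The plan is to track the scalar alignment $A_t := \sip{\bbeta_t}{\bmu} / \norm{\bmu}$, the cosine of the angle between the unit-norm iterate and the Bayes-optimal direction. Because every $\bbeta_t$ is unit-norm after the normalization step of Algorithm~\ref{alg:selftraining}, we have $A_t \in [-1,1]$, and $A_t \to 1$ is equivalent to $\bbeta_t \to \bmu/\norm{\bmu}$. By the $U$-anti-concentration assumption applied to an appropriate one-dimensional projection, the error gap $\err(\bbeta_t) - \err(\bmu)$ is controlled by the angle $\arccos(A_t)$, so it suffices to show $1 - A_t^2 = \tilde O(\eps^2)$. The same anti-concentration, applied in the other direction, converts the hypothesis $\err(\bpl) \leq \cerr$ into a constant lower bound $A_0 \geq c_0 > 0$.

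The heart of the argument is a population-level signal lemma. Differentiating the normalized loss at unit norm gives that the per-sample negative gradient equals $\sigma^{-1} |\ell'(\sigma^{-1}|\sip{\xb_i}{\bbeta_t}|)| \cdot \hat y_i (\xb_i - \sip{\xb_i}{\bbeta_t}\bbeta_t)$, namely the tangent-space projection of $\hat y_i \xb_i$ at $\bbeta_t$. I would show that, whenever $\err(\bbeta_t) \leq \cerr$, the expected inner product of this vector with the tangential component $\bmu - A_t \norm{\bmu}\bbeta_t$ of $\bmu$ is at least $c (1-A_t^2) \norm{\bmu}^2$ for some universal $c>0$. The proof splits the expectation into correctly pseudolabeled points, where $\hat y_i = y_i$ and $\E[\hat y_i \xb_i \mid \hat y_i = y_i]$ points along $\bmu$, and incorrectly pseudolabeled points, whose total contribution is bounded by $\cerr$ times a sub-exponential moment. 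The choice $\sigma \geq R \vee \norm{\bmu}$ combined with the $(U',R)$-anti-anti-concentration and the well-behaved property $-\ell'(z) \geq e^{-z}/C_\ell$ produces a constant lower bound on $|\ell'|$ on a region of constant probability mass, and the threshold $\cerr = R^2/(72 C_\ell U')$ is calibrated precisely so that the noise contribution is at most half of the signal.

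The remainder of the argument combines concentration with a straightforward recursion. Since fresh samples $\xit it$ are drawn each iteration, the empirical gradient concentrates around its population version via sub-exponential tail bounds; taking $B = \tilde \Omega(1/\eps)$ keeps the relevant scalar projection within $\tilde O(\sqrt{\eps})$ of its mean with high probability, and a union bound over $T = \tilde O(d/\eps)$ iterations inflates $B$ by only logarithmic factors. Combining the signal lemma with concentration and expanding the normalization denominator as $\norm{\tilde \bbeta_{t+1}}^{-1} = 1 - O(\eta) + O(\eta^2)$ yields a recursion of the form $1 - A_{t+1}^2 \leq (1 - c\eta)(1 - A_t^2) + \tilde O(\eta \eps)$. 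With $\eta = \tilde \Theta(\eps/d)$ and $T = \tilde \Omega(d/\eps)$ one iterates this to $1 - A_{T-1}^2 = \tilde O(\eps^2)$, and anti-concentration then converts this into $\err(\bbeta_{T-1}) \leq \err(\bmu) + \eps$.

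The hard part will be the coupled induction: the signal lemma needs $\err(\bbeta_t) \leq \cerr$, yet the only reason this condition persists along the trajectory is the lemma itself. I would handle this by a joint induction showing that $A_t$ is essentially monotone nondecreasing (the expected increment is nonnegative, and minibatch noise is too small to overturn it), so that $A_t \geq A_0$ for all $t$, which in turn forces $\err(\bbeta_t) \leq \cerr$ throughout. A subsidiary technical nuisance is that the per-sample gradient depends on $\bbeta_t$ both through the pseudolabel $\hat y_i = \sgn(\sip{\bbeta_t}{\xb_i})$ and through the tangential projection, so the concentration and signal bounds must be set up to hold uniformly over all $\bbeta_t$ encountered; anti-anti-concentration does real work here by producing a uniform-in-$\bbeta$ lower bound on the mass of the region where $|\ell'|$ is bounded away from zero.
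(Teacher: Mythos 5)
Your outline follows the paper's proof closely: a population signal lemma valid whenever $\err(\bbeta_t)\leq\cerr$, orthogonality of the gradient to $\bbeta_t$ due to the weight normalization, an induction that keeps the error condition alive along the trajectory, a contraction recursion, and an anti-concentration conversion from angle to excess error (tracking $A_t=\cos\theta_t$ is equivalent to the paper's $\Delta_t^2=\norm{\bbeta_t-\bar\bmu}^2=2(1-\cos\theta_t)$). However, two quantitative steps, as you state them, would fail with the theorem's parameters. First, your target $1-A_t^2=\tilde O(\eps^2)$ cannot be reached by your own recursion: $1-A_{t+1}^2\le(1-c\eta)(1-A_t^2)+\tilde O(\eta\eps)$ has fixed point $\tilde O(\eps)$, and forcing the additive term down to $\tilde O(\eta\eps^2)$ would require $B=\tilde\Omega(\eps^{-2})$, inflating the unlabeled sample complexity well beyond $\tilde O(d/\eps^2)$. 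The target is also unnecessary: by $U$-anti-concentration the excess error is the mass of the slab $\sip{\vb}{\zb}\in[-\norm\bmu,-\norm\bmu\cos\theta_T]$, which is at most $U\norm\bmu(1-\cos\theta_T)\le U\norm\bmu\sin^2\theta_T$ --- quadratic, not linear, in the angle --- so $1-A_T^2=O(\eps)$ suffices, and this is exactly the conversion the paper uses. Second, your concentration claim ``within $\tilde O(\sqrt\eps)$ of its mean with $B=\tilde\Omega(1/\eps)$'' is a flat additive error and would floor the recursion at $1-A_t^2=\tilde O(\sqrt\eps)$. The paper's Lemma~\ref{lemma:innerproduct_grad_both} avoids this because the fluctuations of $\sip{\bmu}{-\nabla\hat{\lunsupt t}(\bbeta_t)}$ are themselves proportional to the angle: the projection direction $\tilde\bmu=(I-\bbeta_t\bbeta_t^\top)\bar\bmu$ has norm $\sin\theta_t$, so the two relevant empirical averages have sub-exponential norms $K\sin\theta_t$ and $\norm\bmu\sin^2\theta_t$, making the deviation a constant fraction of the $\Theta(\sin^2\theta_t)$ signal whenever $\sin^2\theta_t\ge\eps$ and at most $O(\eps)$ otherwise.

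A related gap sits inside your signal lemma. You bound the mislabeled-point contribution by ``$\cerr$ times a sub-exponential moment,'' but any generic Cauchy--Schwarz or H\"older bound of that kind produces a term of order $\sin\theta_t\sqrt{\cerr}$ (or $\sin\theta_t\,\cerr\log(1/\cerr)$), which scales only linearly in $\sin\theta_t$ and therefore overwhelms the $\Theta(\sin^2\theta_t)$ signal once the angle drops below a constant; the argument would stall far from the Bayes-optimal direction. The paper obtains the necessary quadratic scaling by working in the two-dimensional frame spanned by $\bbeta_t$ and $\bar\bmu$: there both $\ell'$ and the misclassification event $S_t^c$ depend only on the coordinate along $\bbeta_t$, so the orthogonal coordinate of $\sip{\tilde\bmu}{y\xb}$ integrates to zero by isotropy, and the mislabeled term is exactly $\norm\bmu\sin^2\theta_t\,\E[|\ell'|\ind(S_t^c)]\le\norm\bmu\sin^2\theta_t\,\err_t$, which the calibration of $\cerr$ then makes at most half the signal. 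Your induction strategy (near-monotonicity of $A_t$, which preserves $\err(\bbeta_t)\le\cerr$) is fine and matches the paper's $\Delta_t\le\Delta_0$ induction, and your worry about uniformity over $\bbeta_t$ dissolves because fresh batches make $\bbeta_t$ independent of the samples used at step $t$, so a union bound over $t$ suffices.
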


Theorem \ref{thm:selftrain.unsupervised} shows that provided we have a pseudolabeler which achieves a constant level of classification error, then by using only unlabeled examples, self-training with pseudolabels and weight normalization will amplify the pseudolabeler from a weak learner (achieving a constant level of accuracy) to a strong learner (achieving accuracy arbitrarily close to that of the best possible).  Note that for the mixture model, $\sgn(\sip{\bmu}{\cdot})$ is the Bayes-optimal classifier over the distribution (see Fact~\ref{fact:bayes.opt}), and if $\norm{\bmu}$ is small then the best error achievable might be quite large, as the region near the origin could have a large mass of samples that are just as likely to be from the $y=+1$ cluster and the $y=-1$ cluster (consider a mixture of two isotropic 2D Gaussians with means $(+1,0)$ and $(-1,0)$).  Thus in some settings it may not be possible for a pseudolabeler to have error smaller than $\cerr$.  However, we will see in the next section that provided $\norm \bmu$ is bounded below by a universal constant, we can ensure that a classifier trained by gradient descent using only $O(d)$ labeled examples has classification error at most $\cerr$.

\section{Semi-supervised Learning with $O(d)$ Labeled Examples via Self-training}\label{sec:supervised}
Theorem \ref{thm:selftrain.unsupervised} tells us that provided the self-training procedure (Algorithm \ref{alg:selftraining}) starts with a pseudolabeler that has classification error smaller than some absolute constant $\cerr$, self-training will boost this weak learner to a strong learner quickly.  In this section, we show that a standard logistic regression procedure produces a pseudolabeler that can achieve the desired constant accuracy by using only $O(d)$ samples---that is, a constant number of samples with respect to $\eps$.   The particular algorithm we consider is online SGD used to minimize the logistic loss $\ell(z) = \log(1+\exp(-z))$ defined over a linear classifier, and is given in Algorithm~\ref{alg:logistic}.  We use $O(\log(1/\delta))$ independent runs of online SGD to amplify a constant probability guarantee to a high probability guarantee. 


\begin{algorithm}[!t]
	\caption{Logistic regression with online stochastic gradient descent}
	\label{alg:logistic}
	\begin{algorithmic}[1]
		\STATE \textbf{input:} Failure probability $\delta \in (0,1)$,\\ 
		Training dataset $S = \{(\xit ti, y_t^{(i)}) \}$ for $t=0,\dots, T-1,\   i=1,\dots, 4\ceil{\log(1/\delta)}$, \\
		step size $\eta$.
		\STATE $\bbeta_0^{(i)} := 0.$
		\FOR {$i=1, \dots, 4\ceil{\log(1/\delta)}$}
 		\FOR {$t=0, \dots, T-1$}
 		\STATE $\bti {t+1}i = \bti ti - \eta \nabla \log(1 + \exp(-y_t^{(i)} \cdot \sip{\xit ti}{ \bti ti}))$
		\ENDFOR 
		\ENDFOR
		\STATE \textbf{output: $\{\bti ti\}_{t\in [T],\ i\in [\log(1/\delta)]}$} 
	\end{algorithmic}
\end{algorithm}

\begin{theorem}\label{thm:supervised.logistic.regression}
Suppose that $(\xb, y)\sim \calD$ follows a mixture distribution with mean $\bmu$ and parameters $K, U, U', R>0$.  Let $\cerr$ be the constant from Theorem~\ref{thm:selftrain.unsupervised} 
and assume $\norm \bmu \geq 3K \max(\log(8/\cerr), 22K)$.   By running Algorithm~\ref{alg:logistic} with $\eta = (\norm{\bmu}^2 + d)^{-1} \cerr /8$ and $T = 8 \eta^{-1} \cerr^{-1} \norm{\bmu}^2$ iterations, there exists $i\leq 4 \log(1/\delta)$ and $t<T$ such that with probability at least $1-\delta$, 
\[ \P(y \neq \sgn(\sip{\bti ti}{\xb})) \leq \cerr.\]
\end{theorem}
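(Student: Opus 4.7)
The plan is to combine a standard online stochastic gradient descent regret bound for the convex logistic loss against the reference point $\bmu$ with a sub-exponential concentration estimate for the population loss at $\bmu$, and then boost the resulting constant-probability guarantee to a high-probability one via the $4\lceil\log(1/\delta)\rceil$ independent runs. Fix a single trajectory and write $\bbeta_t$ for its $t$-th iterate. Since $\ell(z) = \log(1+e^{-z})$ is convex and $1$-Lipschitz, the textbook online SGD regret inequality against $\bmu$ gives deterministically
\begin{equation*}
\sum_{t=0}^{T-1} \big[\ell(y_t \sip{\bbeta_t}{\xb_t}) - \ell(y_t \sip{\bmu}{\xb_t})\big] \leq \frac{\norm{\bmu}^2}{2\eta} + \frac{\eta}{2}\sum_{t=0}^{T-1} \norm{\nabla_t}^2,
\end{equation*}
where $\nabla_t$ is the $t$-th SGD gradient. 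Because $\bbeta_t$ depends only on samples preceding time $t$, taking expectations turns the left side into $\sum_t \E[L(\bbeta_t)] - T\,\E[L(\bmu)]$, where $L(\bbeta) := \E[\ell(y\sip{\bbeta}{\xb})]$ is the population logistic risk.

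\textbf{Controlling the three terms.} Since $|\ell'|\leq 1$, we have $\norm{\nabla_t}\leq \norm{\xb_t}$, so $\E\norm{\nabla_t}^2 \leq \E\norm{\xb}^2 = \norm{\bmu}^2 + d$ by isotropy of $\zb$. To bound $\E[L(\bmu)]$, I would decompose $y\sip{\bmu}{\xb} = \norm{\bmu}^2 + W$ with $W := y\sip{\bmu}{\zb}$ mean-zero and sub-exponential with norm at most $K\norm{\bmu}$, then split on the event $\{|W|\leq \norm{\bmu}^2/2\}$, use the pointwise inequality $\log(1+e^{-z})\leq e^{-z}$ on the good event, and integrate the sub-exponential tail on its complement. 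The hypothesis $\norm{\bmu} \geq 3K\max(\log(8/\cerr), 22K)$ is calibrated precisely so that both contributions sum to at most $\cerr/8$. With the stated choices $\eta = (\norm{\bmu}^2+d)^{-1}\cerr/8$ and $T = 8\eta^{-1}\cerr^{-1}\norm{\bmu}^2$, each of $\norm{\bmu}^2/(2\eta T)$ and $\eta(\norm{\bmu}^2+d)/2$ equals $\cerr/16$, yielding $\frac{1}{T}\sum_t \E[L(\bbeta_t)] \leq \cerr/4$.

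\textbf{From risk to error, and amplification.} The pointwise inequality $\ind(y\sip{\bbeta}{\xb}\leq 0) \leq \ell(y\sip{\bbeta}{\xb})/\log 2$ promotes the risk bound to $\E\big[\frac{1}{T}\sum_t \err(\bbeta_t)\big] \leq \cerr/(4\log 2) < \cerr/2$, so Markov's inequality gives $\min_t \err(\bbeta_t) \leq \cerr$ with probability at least $1/2$ within a single run. Independence of the $4\lceil\log(1/\delta)\rceil$ trajectories then drives the joint failure probability below $\delta$. The main technical obstacle is the estimate $\E[L(\bmu)] \leq \cerr/8$: the separation threshold $\norm{\bmu} \geq 3K\max(\log(8/\cerr), 22K)$ is the minimum that makes the sub-exponential tail integral small enough on the bad event, and a crude Gaussian-style tail would give a strictly worse dependence, so this concentration step has to be handled carefully rather than via black-box estimates. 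Everything else is a routine online-to-batch conversion followed by confidence boosting.
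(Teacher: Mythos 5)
Your proposal is correct and follows essentially the same route as the paper: an online-SGD/online-to-batch argument for the convex, $1$-Lipschitz logistic loss with comparator $\bmu$, a sub-exponential tail estimate showing the population surrogate risk at $\bmu$ is at most a small constant multiple of $\cerr$ under the stated separation, conversion to 0-1 error via $\ell(0)=\log 2$, and Markov plus the $4\lceil\log(1/\delta)\rceil$ independent runs for confidence amplification. The only differences are cosmetic (you prove the regret bound inline rather than citing it, and bound $\E\,\ell(y\langle\bmu,\xb\rangle)$ via a two-event split instead of the paper's three-region decomposition).
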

The proof of Theorem \ref{thm:supervised.logistic.regression} follows standard stochastic convex optimization arguments and can be found in Appendix~\ref{appendix:supervised}.  

Theorem \ref{thm:supervised.logistic.regression} implies that if we have access to $O\big(  (\norm{\bmu}^2 + d) \norm{\bmu}^2\big)$ labeled examples, where $O(\cdot)$ hides universal constants depending on $K$, $U$, $U'$, and $R$, we can learn a pseudolabeler $\bpl$ with classification error at most $\cerr$.  In particular, for $\norm \bmu = \Theta(1)$, using only $O(d)$ labeled examples suffices to learn a pseudolabler with error at most $\cerr$.  We can then use this pseudolabeler in Theorem \ref{thm:selftrain.unsupervised} with $O(d/\eps^2)$ \textit{unlabeled} examples to perform self-training and yield a classifier which achieves classification error at most $\eps$ larger than the best-possible error.  We collect these results into the following corollary.

\begin{corollary}\label{corollary:semisup}
Let $(\xb, y)\sim \calD$ be a mixture model with mean $\bmu$ and parameters $K,U, U', R = \Theta(1)$.  Assume $\norm \bmu = \Theta(1)$ satisfies
\[ \norm\bmu \geq 3K \max \l( \log( 144 U'/R^2), 22K\r),\]
Then for any $\eps, \delta \in(0,1)$, with probability at least $1-\delta$, using $O(d)$ labeled examples in Algorithm~\ref{alg:logistic} and $\tilde O(d/\eps^2)$ unlabeled examples in Algorithm~\ref{alg:selftraining} suffices to learn a predictor $\bbeta$ to within $\eps$ error of the Bayes-optimal classification error, where $O(\cdot)$ hides constants depending on $K$, $U$, $U'$, $R$, and $\log(1/\delta)$ only, and $\tilde O$ additionally suppresses logarithmic dependence on $\eps^{-1}$ and $d$. 
\end{corollary}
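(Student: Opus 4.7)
The plan is to combine the two main theorems with an intermediate model-selection step. First, I would invoke Theorem~\ref{thm:supervised.logistic.regression} with failure probability $\delta/3$ and with the logistic loss (so $C_\ell = 2$ and $\cerr = R^2/(144\, U')$). The hypothesis $\norm \bmu \geq 3K\max(\log(144U'/R^2), 22K)$ in the corollary is precisely the condition $\norm \bmu \geq 3K\max(\log(8/\cerr), 22K)$ needed for the supervised theorem (up to an absorbable constant inside the logarithm, which can be tightened by taking $C_{\mathrm{err}}$ slightly smaller at the cost of constants). Since $\norm \bmu = \Theta(1)$, we have $\eta = \Theta(d^{-1})$ and $T = 8\eta^{-1}\cerr^{-1}\norm \bmu^2 = O(d)$ iterations per run, with $4\lceil\log(3/\delta)\rceil$ independent runs. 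This phase consumes $O(d)$ labeled examples and produces a collection $\mathcal{C}$ of $O(d)$ candidate linear classifiers, at least one of which has population classification error $\leq \cerr$.

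Next, I would handle the fact that Theorem~\ref{thm:supervised.logistic.regression} only guarantees existence of a good candidate, not identification. The plan is to apply Theorem~\ref{thm:supervised.logistic.regression} with a slightly tightened target error $\cerr/2$ in place of $\cerr$ (which costs only constant factors in $\eta$ and $T$, still $O(d)$ labeled samples), then draw a fresh labeled validation set of size $m = O(\cerr^{-2}\log(|\mathcal{C}|/\delta)) = O(\log(d/\delta))$. By Hoeffding's inequality and a union bound over the $O(d)$ candidates, with probability at least $1 - \delta/3$ every empirical error is within $\cerr/4$ of its population counterpart, so selecting the candidate $\bpl$ with smallest empirical error yields a pseudolabeler with true error at most $\cerr$. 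The validation cost $O(\log(d/\delta))$ is dominated by $O(d)$.

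Finally, I would feed $\bpl$ into Algorithm~\ref{alg:selftraining} and apply Theorem~\ref{thm:selftrain.unsupervised} with failure probability $\delta/3$. Since $K, U, U', R, \norm \bmu = \Theta(1)$, the required temperature $\sigma = \Theta(1)$ is a constant, and the theorem's sample complexity $TB = \tilde O(d/\eps^2)$ unlabeled examples produces an iterate $\bbeta_{T-1}$ with $\err(\bbeta_{T-1}) \leq \err(\bmu) + \eps$. A union bound over the three failure events (supervised training, model selection, self-training) yields total failure probability at most $\delta$.

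The proof is essentially a bookkeeping exercise once the two theorems are in hand; the only nontrivial step is the model-selection bridge, needed because Theorem~\ref{thm:supervised.logistic.regression} gives an existential rather than constructive guarantee. This bridge is not a real obstacle, but it is the one place where a little care is required to verify that the additional labeled-sample cost stays within the claimed $O(d)$ budget, and to check that the constants $\cerr$ from Theorem~\ref{thm:selftrain.unsupervised} and $8/\cerr$ from Theorem~\ref{thm:supervised.logistic.regression} compose consistently with the hypothesis on $\norm \bmu$ stated in the corollary.
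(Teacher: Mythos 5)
Your proposal is correct and follows essentially the same route as the paper, which presents Corollary~\ref{corollary:semisup} as a direct composition of Theorem~\ref{thm:supervised.logistic.regression} (a pseudolabeler with error at most $\cerr$ from $O(d)$ labeled examples) with Theorem~\ref{thm:selftrain.unsupervised} (self-training with $\tilde O(d/\eps^2)$ unlabeled examples) together with a union bound over the failure events. Your model-selection bridge and your remark about the constant inside the logarithm address details the paper leaves implicit --- Algorithm~\ref{alg:logistic} outputs a whole family of iterates and Theorem~\ref{thm:supervised.logistic.regression} is only existential, and the corollary's hypothesis indeed drops the factor $8$ from $\log(8/\cerr)$ --- so these are legitimate refinements of the paper's (unwritten) argument rather than a different approach.
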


To the best of our knowledge, Corollary~\ref{corollary:semisup} is the first result to show that a semi-supervised self-training algorithm can learn an optimal classifier using only a constant number of labeled examples. 

On a related note, we want to acknowledge that for Gaussian mixture models, there exist purely unsupervised techniques (based on clustering methods) for which $\tilde O(d/\eps)$ unlabeled examples suffices to learn within $\eps$ of the \textit{clustering error} $\min\big(\P(y \neq \sgn(\sip{\bmu}{\xb})), 1 - \P(y\neq \sgn(\sip \bmu {\xb}))\big)$~\citep{li2017minimaxgaussianmixture}.  Thus, under more restrictive distributional assumptions and using algorithms designed for mixture models, it is possible to optimally learn a mixture model using only unlabeled examples.  We note this to emphasize that in this work we do not make the claim that self-training with pseudolabels is the optimal algorithm for learning mixture models.  Rather, our aim is to develop a better understanding of how self-training with pseudolabels can achieve good performance using few labeled examples.

\section{Proof of Main Results}\label{sec:proof}
In this section we provide a proof for Theorem~\ref{thm:selftrain.unsupervised}.  The key to our proof comes from deriving a lower bound that takes the form
\begin{equation}
    \sip{\bar\bmu}{-\nabla \hat {\lunsupt t}(\bbeta_t)} \geq C_0 \sin^2( \theta_t),\label{eq:key.inequality.sketch}
\end{equation}
where $\theta_t\in [0,\pi/2]$ is the angle between $\bbeta_t$ and $\bar \bmu$ and $C_0$ is some absolute constant.  To see the importance of such an inequality, let us look at the increments between the weights found using Algorithm~\ref{alg:selftraining} and those of the (normalized) ideal predictor $\bar \bmu := \bmu/\norm{\bmu}$.  Denote $\Delta_t^2 = \norm{\bbeta_t-\bar\bmu}^2$.  Let $\tilde \Delta_t^2 = \|\tilde\bbeta_t-\bar\bmu\|_2^2$.  Then,
\begin{align}\nonumber 
    \Delta_t^2 - \Delta_{t+1}^2 &\overset{(i)}\ge \Delta_t^2 - \tilde\Delta_{t+1}^2\notag\\\nonumber 
    &= 2 \eta \sip{\nabla \hat {\lunsupt t}(\bbeta_t)}{\bbeta_t-\bar\bmu} - \eta^2 \norm{ \nabla \hat {\lunsupt t}(\bbeta_t)}^2 \\\nonumber 
    &\overset{(ii)}= 2 \eta \sip{- \nabla \hat {\lunsupt t}(\bbeta_t)}{\bar\bmu} - \eta^2 \norm{\nabla \hat {\lunsupt t}(\bbeta_t)}^2 \\
    &\overset{(iii)}{\geq} 2\eta  \sip{- \nabla \hat {\lunsupt t}(\bbeta_t)}{\bar\bmu} - \eps.\label{eq:increment.sketch}
\end{align}
Inequalities $(i)$ and $(ii)$ follow from the fact that $\nabla \hat {\lunsupt t}(\bbeta_t)$ is orthogonal to $\bbeta_t$, as can be seen by the identity
\begin{align}\label{eq:gradient.formula}
\nabla \hat {\lunsupt t} (\bbeta_t) = \frac{1}{\sigma B\|\bbeta_t\|_2}\sum_{i=1}^B \ell' \bigg(\f 1 \sigma \frac{|\sip{\bbeta_t}{\xit it}| }{\|\bbeta_t\|_2}\bigg)\cdot\sgn(\sip{\bbeta_t}{\xit it})\cdot\bigg(I-\frac{\bbeta_t\bbeta_t^\top}{\|\bbeta_t\|_2^2}\bigg)\xit it.
\end{align}
In particular, $(i)$ follows from the identity  $\snorm{\tilde \bbeta_{t+1}}^2 = \snorm{\bbeta_t}^2 + \eta^2\snorm{\nabla \hat {\lunsupt t}(\bbeta_t)}^2>1$.   
Inequality $(iii)$ comes from taking $\eta$ sufficiently small.  Thus, if we have a lower bound like~\eqref{eq:key.inequality.sketch}, then \eqref{eq:increment.sketch} shows that whenever the angle $\theta_t$ between $\bbeta_t$ and $\bar \bmu$ is large, the distance between $\bbeta_t$ and $\bar \bmu$ will decrease.   Perhaps surprisingly, Lemma \ref{lemma:innerproduct_grad_both} below shows that one can guarantee this condition holds \textit{provided the predictor $\bbeta_t$ has classification error smaller than some absolute constant $\cerr$}.


\begin{lemma}
\label{lemma:innerproduct_grad_both}
Let $\calD$ be a mixture model with mean $\bmu$ and parameters $K, U,U', R>0$.  Let $\ell$ be well-behaved for some $C_\ell\geq 1$, and assume the temperature satisfies $\sigma \geq R \vee \norm \bmu$.  Suppose that $\norm{\bbeta_t}=1$ is an initial estimate.  Denote $\theta_t$ as the angle between $\bbeta_t$ and $\bmu$, and assume that $\theta_t \in [0,\pi/2]$.   Assume the classification error of $\bbeta_t$ satisfies
\begin{equation}\nonumber \label{eq:error.ub.req}
    \err_t := \P\big(y \neq \sgn(\sip{\bbeta_t}{\xb})\big) \leq \f{ R^2}{72 C_\ell U'} =: \cerr.
\end{equation}
Then we have
\[ \sip{ \bmu}{-\E \nabla \hat {\lunsupt t}(\bbeta_t)} \geq \f{ R^2 \norm \bmu^2}{36 \sigma C_\ell U'} \cdot \sin^2(\theta_t).\]
Moreover, there exists a universal constant $C_B>0$ such that for any $\eps, \delta\in (0,1)$,
\[ B \geq C_B \l( \f{ K C_\ell U'}{R^2}\r)^2 \eps^{-1} \log(2/\delta),\]
then with probability at least $1-\delta$,
\[ \sip{\bmu}{-\nabla \hat {\lunsupt t}(\bbeta_t)} \geq \f{ R^2 \norm{\bmu}^2}{72 \sigma C_\ell U'}  \sin^2\theta_t - \eps/2.\] 
\end{lemma}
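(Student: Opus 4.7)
The plan is to compute $\sip{\bmu}{-\E\nabla\hat{\lunsupt t}(\bbeta_t)}$ in closed form via two symmetry reductions, bound the resulting scalar expectation using the anti-anti-concentration and well-behavedness assumptions, and then invoke Bernstein's inequality for the high-probability statement. Starting from formula \eqref{eq:gradient.formula} with $\|\bbeta_t\|=1$, one has
\begin{equation*}
\sip{\bmu}{-\E\nabla\hat{\lunsupt t}(\bbeta_t)} \;=\; -\f{1}{\sigma}\,\E\l[\ell'\l(\tfrac{|\sip{\bbeta_t}{\xb}|}{\sigma}\r)\sgn(\sip{\bbeta_t}{\xb})\sip{\bmu^\perp}{\xb}\r],
\end{equation*}
where $\bmu^\perp:=(I-\bbeta_t\bbeta_t^\top)\bmu$ satisfies $\|\bmu^\perp\|=\|\bmu\|\sin\theta_t$. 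I would parameterize the 2D subspace $\mathrm{span}(\bbeta_t,\bmu)$ by the orthonormal frame $(\bbeta_t,\bar\bmu^\perp)$ where $\bar\bmu^\perp:=\bmu^\perp/\|\bmu^\perp\|$, and write $a:=\|\bmu\|\cos\theta_t$, $b:=\|\bmu\|\sin\theta_t$, $z_1:=\sip{\bbeta_t}{\zb}$, $z_2:=\sip{\bar\bmu^\perp}{\zb}$, so that $\sip{\bbeta_t}{\xb}=ya+z_1$ and $\sip{\bmu^\perp}{\xb}=b(yb+z_2)$.

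Two distributional symmetries then collapse the expression. Since the mixture is class-balanced and $\zb\overset{d}{=}-\zb$, the map $(y,\zb)\mapsto(-y,-\zb)$ preserves the joint distribution and leaves the integrand invariant (the sign flip of $\sgn(\cdot)$ cancels the sign flip of $yb+z_2$), which reduces the computation to $y=+1$. Since $\zb$ is rotationally symmetric, the 2D marginal of $(z_1,z_2)$ is rotationally symmetric, so $z_2\mid z_1\overset{d}{=}-z_2\mid z_1$, which kills the $z_2$-cross term. The result is
\begin{equation*}
\sip{\bmu}{-\E\nabla\hat{\lunsupt t}(\bbeta_t)} \;=\; \f{\|\bmu\|^2\sin^2\theta_t}{\sigma}\,\E\big[g(|a+z_1|/\sigma)\sgn(a+z_1)\big], \qquad g:=-\ell'\geq 0.
\end{equation*}

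The main work is then to show the scalar expectation is at least $R^2/(36C_\ell U')$. I would split it into the ``correct'' piece ($z_1>-a$) and the ``incorrect'' piece ($z_1<-a$). For the incorrect piece, $|g|\leq 1$ from the 1-Lipschitzness of $\ell$, so its magnitude is at most $\P(z_1<-a)=\err_t\leq\cerr=R^2/(72C_\ell U')$. For the correct piece, I would substitute $w:=a+z_1>0$, use the well-behaved lower bound $g(w/\sigma)\geq e^{-w/\sigma}/C_\ell$, and use the 1D marginal lower bound $p_{z_1}(t)\geq \sqrt{2}R/U'$ for $|t|\leq R/\sqrt{2}$, which follows from the 2D anti-anti-concentration assumption by integrating the joint density over $z_2\in[-\sqrt{R^2-t^2},\sqrt{R^2-t^2}]$. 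Restricting $w$ to an interval of width at least $R/\sqrt{2}$ inside the anti-anti-concentration regime---taking $w\in[a-R/\sqrt{2},\,a+R/\sqrt{2}]$ when $a\geq R/\sqrt{2}$ and $w\in[0,\,a+R/\sqrt{2}]$ otherwise---and using $\sigma\geq\|\bmu\|\vee R$ to conclude $g(w/\sigma)\geq e^{-2}/C_\ell$ on the entire interval, yields a correct contribution of at least $R^2e^{-2}/(C_\ell U')$. Since $e^{-2}-1/72>1/36$, subtracting the incorrect contribution yields the desired bound, and multiplying through by $\|\bmu\|^2\sin^2\theta_t/\sigma$ gives the expectation statement of the lemma.

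For the high-probability bound, the iid summands $X_i:=\sigma^{-1}\ell'(|\sip{\bbeta_t}{\xit it}|/\sigma)\sgn(\sip{\bbeta_t}{\xit it})\sip{\bmu^\perp}{\xit it}$ satisfy $|X_i|\leq |\sip{\bmu^\perp}{\xit it}|/\sigma$ since $|\ell'|\leq 1$, and $\sip{\bmu^\perp}{\xit it}/\sigma$ is a sub-exponential random variable with norm at most $K\|\bmu^\perp\|/\sigma\leq K$ (using $\sigma\geq\|\bmu\|$). Sub-exponential Bernstein then gives $|B^{-1}\sum_i X_i-\E X_1|\leq \eps/2$ with probability at least $1-\delta$ at the stated batch size, and combining with the expectation bound yields the high-probability inequality. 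The step I expect to be most delicate is the case analysis in the lower bound on the correct piece: as $a=\|\bmu\|\cos\theta_t$ varies over $[0,\|\bmu\|]$, the natural integration interval changes shape between the two regimes, and one must verify that the bound $R^2e^{-2}/(C_\ell U')$ (or a comparable constant-factor variant) holds in both cases with enough slack to absorb the misclassification contribution $\cerr$ and still clear the prescribed threshold $R^2/(36C_\ell U')$.
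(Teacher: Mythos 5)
Your treatment of the expectation bound is correct and takes a mildly different route from the paper's at the key step. The split into a ``correct'' piece and an ``incorrect'' piece is the paper's $S_t$/$S_t^c$ decomposition in a slightly different algebraic arrangement, and your bound of $\err_t\le\cerr$ on the incorrect piece matches the paper's. For the correct piece, the paper passes to polar coordinates, inserts a $\sin\phi$ factor, evaluates $\int_0^{\pi/2}e^{-a\cos\phi}\sin\phi\,d\phi=(1-e^{-a})/a$, and applies anti-anti-concentration on the disk of radius $R$; you instead extract a one-dimensional marginal density lower bound $p_{z_1}(t)\ge\sqrt2R/U'$ on $|t|\le R/\sqrt2$ from the two-dimensional anti-anti-concentration and integrate over an interval of length at least $R/\sqrt2$ on which $w\le\norm\bmu+R\le2\sigma$. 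Both give $\Theta(R^2/(C_\ell U'))$, your constants clear the $1/36$ threshold with room to spare, and the case analysis you flag as delicate in fact goes through. This part is fine.

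The genuine gap is in the concentration step, which you (ironically) treat as routine. You relax the sub-exponential norm of $\sip{\bmu^\perp}{\xit it}/\sigma$ from $K\norm{\bmu^\perp}/\sigma$ to the constant $K$ and assert that Bernstein then gives deviation at most $\eps/2$ at batch size $B=\Theta\big((KC_\ell U'/R^2)^2\eps^{-1}\log(2/\delta)\big)$. It does not: with a constant sub-exponential norm and $B$ scaling only as $\eps^{-1}$, Bernstein gives deviation of order $\sqrt{\eps}$, and a $\sqrt\eps$ deviation cannot be absorbed by the right-hand side of the claimed inequality when, say, $\sin^2\theta_t$ is of order $\eps^{3/4}$ (the main term is then $o(\sqrt\eps)$ and so is $\eps/2$). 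Getting $\eps^{-1}$ rather than $\eps^{-2}$ batch size requires keeping the angle dependence you discarded: the fluctuating part of $X_i$ coming from $\sip{\bmu^\perp}{\zb_i}$ has sub-exponential norm $K\norm\bmu\sin\theta_t/\sigma$, and the part coming from $y_i\norm{\bmu^\perp}^2$ (which your stated norm bound omits entirely) fluctuates on scale $\norm\bmu^2\sin^2\theta_t/\sigma$. Concentrating these two pieces separately, as the paper does, yields a deviation of order $(K\sin\theta_t+\norm\bmu\sin^2\theta_t)\sqrt{\log(2/\delta)/B}$, which is then absorbed either by the paper's case split on $\sin^2\theta_t$ versus $\eps$ or by the inequality $\sin\theta_t\sqrt\eps\le(\sin^2\theta_t+\eps)/2$: the $\sin^2\theta_t$ part is eaten by the factor-of-two slack between $1/36$ and $1/72$, and the $\eps$ part by the $-\eps/2$ term. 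With that repair your argument is complete.
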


The proof of the first part of Lemma \ref{lemma:innerproduct_grad_both} involves calculating integrals over the domain of $\xb$ and using the distributional properties of concentration and anti-anti-concentration.  To convert the guarantee for the population-level gradient into one for batches, we use concentration of sub-exponential random variables.  The proof is given in Appendix~\ref{appendix:selftrain}. 




With the above in hand, we can complete the proof of Theorem \ref{thm:selftrain.unsupervised}.
\begin{proof}[Proof of Theorem \ref{thm:selftrain.unsupervised}]
For notational simplicity, in the remainder of the proof let us denote
\begin{equation}\nonumber 
    C_g := \l( \f{R^2 \norm \bmu}{72 \sigma C_\ell U'}\r)^{-1},\quad  C_{d} := 2\norm \bmu^2 + 2d K^2 \log^2(dBT/\delta).
\end{equation}
Note that $C_g$ (the $g$ denoting gradient; see Lemma~\ref{lemma:innerproduct_grad_both}) is a universal constant independent of the dimension while $C_{d}$ depends on the dimension.  In the remainder of the proof we will use $\eta = \varepsilon/(16C_dC_g)$. 

Let $\bar \bmu := \bmu / \norm \bmu$.  Denote $\Delta_t^2 = \norm{\bbeta_t-\bar\bmu}^2$.  Let $\tilde \Delta_t^2 = \|\tilde\bbeta_t-\bar\bmu\|_2^2$.  
Using the same argument from~\eqref{eq:increment.sketch}, we have 
\begin{align}\label{eq:increment.base}
    \Delta_t^2 - \Delta_{t+1}^2 \geq 2 \eta \sip{-\nabla \hat {\lunsupt t}(\bbeta_t)}{\bar\bmu} - \eta^2 \norm{\nabla \hat {\lunsupt t}(\bbeta_t)}^2.
\end{align}
To control the gradient norm term, we use concentration.  Standard concentration of sub-exponential random variables gives (see Lemma \ref{lemma:concentration.subexp} for the full details) with probability at least $1-\delta$, for all $i\in [B]$ and $t\in [T]$,
\begin{equation}\label{eq:norm.ineq}
    \snorm{\xit it}^2 \leq 2 \norm{\bmu}^2 + 2 d K^2 \log^2(dBT/\delta) =:C_d.
\end{equation}
By Jensen's inequality,
\begin{equation}\label{eq:gradient.norm.bound}
\snorm{\nabla \hat{\lunsupt t}(\bbeta_t)}^2 \leq \f 1 {\sigma^2 B} \summ i B |\ell'(|\sip{\bbeta_t}{\xit it}| / \sigma )|^2 \snorm{\xit it}^2 \leq \f {C_d}{\sigma^2}.
\end{equation}
Substituting~\eqref{eq:gradient.norm.bound} into~\eqref{eq:increment.base}, we get
\begin{equation}\label{eq:increment.2}
    \Delta_t^2 - \Delta_{t+1}^2 \geq 2 \eta \l[ \sip{-\nabla \hat {\lunsupt t}(\bbeta_t)}{\bar \bmu} - \eta C_d/\sigma^2\r].
\end{equation}

For the first part of our proof, we claim that for all $t=0,1,\dots,$ it holds that $\Delta_t\leq \Delta_0$,  $\theta_t\in [0,\pi/2]$, and $\err(\bbeta_t)\leq \cerr$.  We show this result by induction.  For the base case, we have for any $\bbeta$ of unit norm,
 \begin{align}\nonumber 
     \P(y \neq \sgn(\sip{\bbeta}{\xb}) &= \P(\sip{\bbeta}{y\xb} < 0) \\\nonumber 
     &= \P(\sip{\bbeta}{y\xb-\bmu} < -\sip{\bbeta}{\bmu}) \\
     \label{eq:error.cos.identity}
     &= \P(\sip{\bbeta}{\zb} < - \norm \bmu\cos \theta),
 \end{align}
 where $\theta$ denotes the angle between $\bbeta$ and $\bmu$.  Since $\err(\bpl)\leq \cerr < 1/2$ and $\zb$ is mean zero, we must have $\theta_0\in [0,\pi/2]$.  Thus the base case $t=0$ holds.   Now assume the result holds for $t\in \N$ and consider the case $t+1$.  
  Since $\bbeta_t$ and $\bar \bmu$ are each of unit norm, we have the identity
 \begin{equation}\label{eq:delta.sintheta.identity}
     \Delta_t^2 = \norm{\bbeta_t-\bar \bmu}^2 = 2(1-\cos \theta_t) = 4 \sin^2 (\theta_t/2) \implies \Delta_t = 2 \sin(\theta_t/2).
 \end{equation}
 
 By the induction hypothesis, $\err(\bbeta_t)\leq \cerr$ and $\theta_t\in [0,/\pi/2]$.  We can thus use Lemma~\ref{lemma:innerproduct_grad_both} (with $\eps$ from the lemma statement replaced with $\eps/8C_g$) and \eqref{eq:increment.2} to get 
\begin{align}\nonumber 
    \Delta_t^2 - \Delta_{t+1}^2 &\geq 2 \eta \l[ \f 1 {C_g} \sin^2(\theta_t)  - \f{\eps}{16C_g}- \f{ \eta C_d}{\sigma^2}\r] \\
    &\geq 2 \eta \l[ \f 1 {4 C_g} \Delta_t^2 -   \f{\eps}{16C_g}- \f{ \eta C_d}{\sigma^2} \r].\label{eq:lines268269}
\end{align}
In the last line we have used~\eqref{eq:delta.sintheta.identity} and that $\theta_t\in [0,\pi/2]$.  
We therefore have
\begin{align*}
    \Delta_{t+1}^2 &\leq \left(1 - \frac{ \eta}{2 C_g} \right) \Delta_t^2 + \eta \eps / ( 8 C_g) + 2 \eta^2 C_d/\sigma^2 \\
    &\overset{(i)}\leq \left( 1 - \frac{ \eta}{2 C_g}\right) \Delta_0^2 + \eta \eps / ( 8 C_g) + 2 \eta^2 C_d/\sigma^2 \\
    &= \Delta_0^2 - \eta \left( \frac{ \Delta_0^2}{2C_g} - \f{ \eps}{ 8 C_g} - \f{ 2 \eta C_d}{\sigma^2 } \right).
\end{align*}
In $(i)$ we have used that $\eta = \varepsilon/(16C_dC_g \sigma^2)$ implies $1-\eta /2C_g >0$ and the inductive hypothesis that $\Delta_t^2\leq \Delta_0^2$.   Thus, we see that the choice of $\eta$ implies (where we assume $\Delta_0^2 > \varepsilon$ without loss of generality),
\[ \frac{\Delta_0^2}{2 C_g} - \f{ \eps}{8C_g} - \f{2 \eta C_d}{\sigma ^2 } = \frac{ \Delta_0^2}{2 C_g} -  \frac{\varepsilon}{4 C_g} > 0.\]
Hence $\Delta_{t+1}^2 \leq \Delta_0^2$.  Using \eqref{eq:error.cos.identity} and~\eqref{eq:delta.sintheta.identity} and the induction hypothesis, this implies $\err(\bbeta_{t+1})\leq \cerr$ and $\theta_{t+1}\in [0,\pi/2]$.   This completes the induction and hence we have that for all $t$, $\err(\bbeta_t)\leq \cerr$ and $\theta_t\in [0,\pi/2]$ holds so that we may apply Lemma~\ref{lemma:innerproduct_grad_both} for every $t$. In particular, for every $t$, \eqref{eq:lines268269} holds.  We re-arrange~\eqref{eq:lines268269} to get for any $T\in \N$,
\begin{equation*}
    \Delta_T^2 \leq (1 - \eta /2C_g) \Delta_{T-1}^2 + \eta \eps / (8 C_g) +  2 C_d \eta^2/ \sigma^2.
\end{equation*}
One can verify (see Lemma~\ref{lemma:recursion.calculation} for the detailed calculation) that for $\eta = \eps/(16 C_d C_g \sigma^2)$ and provided the number of iterations satisfies $T \geq 32 C_d C_g^2 \sigma^2 \eps^{-1} \log(32 C_d C_g^2 \sigma^2 \eps^{-1})$, this implies
\begin{align*}
    4 \sin^2(\theta_T/2) = \Delta_T^2 \leq \eps.
\end{align*}
To convert the guarantee for the angle between $\bbeta_t$ and $\bmu$ into one on the gap of the classification error between $\bbeta_t$ and $\bar \bmu$, we use \eqref{eq:error.cos.identity} to write
\begin{align} \nonumber 
    \err(\bbeta_t) - \err(\bar \bmu) &= \P(\sip{\bbeta_t}{\zb} < - \norm \bmu \cos \theta_t) - \P( \sip{\bar \bmu}{\zb} < - \norm \bmu) \\ \nonumber 
    &\overset{(i)}= \P(\sip{\vb}{\zb} \in [- \norm \bmu, - \norm \bmu \cos \theta_t]) \\ \nonumber 
    &\overset{(ii)}\leq U \norm \bmu [1-\cos \theta_t] \\ \nonumber 
    &\overset{(iii)}\leq U \norm \bmu \sin^2\theta_t.
\end{align}
In $(i)$ we use that $\zb$ is rotationally invariant and that $\norm {\bbeta_t}=\norm{\bar \bmu}=1$ so that $\sip{\bbeta_t}{\zb}$ and $\sip{\bar \bmu}{\zb}$ have the same distribution as $\sip{\vb}{\zb}$ for a vector $\vb$ satisfying $\norm {\vb}=1$.  In $(ii)$ we have used the definition of $U$-anti-concentration. In $(iii)$ we have used the inequality $1-\cos\theta_t = 1-\sqrt{1-\sin^2\theta_t}\le \sin^2\theta_t$.  Since $\sin^2 \theta_{T} \leq 4\sin^2 (\theta_T/2) \leq \epsilon$, by rescaling $\eps$ to $\eps / (U \norm \bmu)$, we get the desired result. 

\end{proof}

\section{Conclusion}\label{sec:conclusion}
In this work we theoretically analyzed an increasingly popular semi-supervised learning method: self-training with pseudolabels via gradient based optimization of the cross-entropy loss following supervised learning with a limited number of samples.  We considered the setting of general mixture models satisfying benign concentration and anti-concentration properties.  We showed that provided the initial pseudolabeler has classification error smaller than some absolute constant $\cerr$, using $\tilde O(d/\eps^2)$ unlabeled samples suffices for the self-training procedure to get within $\eps$ classification error of the Bayes-optimal classifier for the distribution.  By showing that the standard gradient descent algorithm can learn a pseudolabeler with classification error at most $\cerr$ using only $O(d)$ labeled examples, our results provide the first proof that a constant (with respect to $\eps$) number of labeled examples suffices for optimal performance in a semi-supervised self-training algorithm. 

For future research, we are interested in developing better sample complexity guarantees for self-training with unlabeled examples.  We are additionally interested in understanding if some of the methods we have developed can translate to settings where the optimal classifier is nonlinear.  We expect this analysis to require novel non-convex optimization analyses.  Our hope is that such settings will allow for better insight into the usage of self-training in neural networks.


\appendix 
\section{Proofs from Section \ref{sec:self.train.converts.weak.strong}}\label{appendix:selftrain}

\subsection{Proof of Lemma \ref{lemma:innerproduct_grad_both}: expected value} 
In this section we prove the first part of Lemma~\ref{lemma:innerproduct_grad_both}, involving the lower bound given for $\sip{\bmu}{-\E \nabla \hat{\lunsupt t}(\bbeta_t)}$.  Our proof relies upon similar ideas used by~\citet{diakonikolas2020nonconvex} and~\citet{zoufrei2021adversarial} for learning halfspaces with agnostic noise.  At a high level, the noise in the halfspace setting considered by~\citep{diakonikolas2020nonconvex,zoufrei2021adversarial} corresponds to the error made by the pseudolabeler in our setting.  
\begin{lemma}[Lemma \ref{lemma:innerproduct_grad_both}, expected value]\label{lemma:innerproduct_grad_mean_expectation}
Let $(\xb, y)\sim \calD$ be a mixture model with mean $\bmu$ and parameters $K, U, U', R>0$.  Let $\ell$ be well-behaved for some $C_\ell\geq 1$, and assume the temperature satisfies $\sigma \geq R \vee \norm \bmu$.  Suppose that $\norm{\bbeta_t}=1$ is an initial estimate.  Denote $\theta_t$ as the angle between $\bbeta_t$ and $\bmu$, and assume that $\theta_t \in [0,\pi/2]$.   Assume the classification error of $\bbeta_t$ satisfies
\begin{equation}\label{eq:error.ub.req.restated} \nonumber 
    \err_t := \P(y \neq \sgn(\sip{\bbeta_t}{\xb})) \leq \f{ R^2}{72 C_\ell U'} =: \cerr.
\end{equation}
Then
\[ \sip{\bar \bmu}{-\E \nabla \hat {\lunsupt t}(\bbeta_t)} \geq \f{ R^2 \norm \bmu^2}{36 \sigma C_\ell U'} \cdot \sin^2(\theta_t).\]
\end{lemma}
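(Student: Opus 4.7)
The plan is to evaluate $\sip{\bar\bmu}{-\E \nabla \hat{\lunsupt t}(\bbeta_t)}$ in closed form up to a scalar one-dimensional expectation and then lower bound that expectation using the well-behaved loss inequality $-\ell'(z)\ge\exp(-z)/C_\ell$ together with the $(U',R)$-anti-anti-concentration hypothesis. The proof proceeds in four reductions: (a) project onto the perpendicular component $\tilde\bmu := (I-\bbeta_t\bbeta_t^\top)\bar\bmu$, of norm $\sin\theta_t$; (b) use the parity symmetry of the mixture to reduce the $(\xb,y)$-expectation to a single-cluster expectation $\wb\sim\bmu+\zb$; (c) kill a cross term via conditional symmetry of the coordinate of $\zb$ orthogonal to $\bbeta_t$; and (d) apply the loss and density lower bounds.

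For (a), starting from \eqref{eq:gradient.formula} with $\norm{\bbeta_t}=1$ and $(I-\bbeta_t\bbeta_t^\top)\bar\bmu=\tilde\bmu$, we get
\[
\sip{\bar\bmu}{-\E\nabla\hat{\lunsupt t}(\bbeta_t)}=\tfrac{1}{\sigma}\E\bigl[-\ell'(|\sip{\bbeta_t}{\xb}|/\sigma)\sgn(\sip{\bbeta_t}{\xb})\sip{\tilde\bmu}{\xb}\bigr].
\]
For (b), condition on $y\in\{\pm 1\}$ and use the substitution $\zb\mapsto-\zb$ (which preserves the law of $\zb$): this maps the $y=-1$ integrand to the $y=+1$ integrand, so the total expectation equals the $y=+1$ expectation, which is over $\wb=\bmu+\zb$. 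Writing $Z_1:=\sip{\bbeta_t}{\zb}$, $\bar v:=\tilde\bmu/\sin\theta_t$, $Z_2:=\sip{\bar v}{\zb}$, and $A:=\norm\bmu\cos\theta_t+Z_1$, this yields
\[
\sip{\bar\bmu}{-\E\nabla\hat{\lunsupt t}(\bbeta_t)}=\tfrac{1}{\sigma}\E\bigl[-\ell'(|A|/\sigma)\sgn(A)\bigl(\norm\bmu\sin^2\theta_t+\sin\theta_t\,Z_2\bigr)\bigr].
\]
For (c), rotational symmetry of $\zb$, specialized to the reflection that negates $\bar v$ and fixes $\bbeta_t$, gives $(Z_1,Z_2)\stackrel{d}{=}(Z_1,-Z_2)$ and hence $\E[Z_2\mid Z_1]=0$; since every remaining factor depends only on $Z_1$, the $Z_2$ term vanishes, leaving $\tfrac{\norm\bmu\sin^2\theta_t}{\sigma}\E[-\ell'(|A|/\sigma)\sgn(A)]$.

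For (d), decompose $\E[-\ell'(|A|/\sigma)\sgn(A)]=I_+-I_-$ with $I_\pm:=\E[-\ell'(|A|/\sigma)\ind(\pm A>0)]$. The $1$-Lipschitz decreasing property of $\ell$ gives $0\le-\ell'\le 1$, so $I_-\le\P(A<0)=\err_t$. For $I_+$, the well-behaved assumption gives $-\ell'(z)\ge\exp(-z)/C_\ell$ for $z>0$; on the event $\{|Z_1|\le R\}$, $|A|\le\norm\bmu+R\le 2\sigma$ by $\sigma\ge R\vee\norm\bmu$, so $\exp(-|A|/\sigma)\ge e^{-2}$. Integrating the two-dimensional lower bound $p_{\bbeta_t,\bar v}\ge 1/U'$ over the radius-$R$ disk in $\mathrm{span}(\bbeta_t,\bar v)$ yields $\P(|Z_1|\le R)\ge\pi R^2/U'$, whence $I_+\ge (e^{-2}/C_\ell)(\pi R^2/U'-\err_t)$. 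Combined with $\err_t\le\cerr=R^2/(72C_\ell U')$, this gives $I_+-I_-\ge R^2(\pi e^{-2}-1/36)/(C_\ell U')\ge R^2/(36C_\ell U')$. Multiplying by the prefactor $\norm\bmu\sin^2\theta_t/\sigma$ yields the claimed lower bound (the $\norm\bmu^2$ factor in the statement follows from the identity $\sip{\bmu}{\cdot}=\norm\bmu\sip{\bar\bmu}{\cdot}$, which brings in one additional $\norm\bmu$).

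The main obstacle is tracking the constants so that the positive contribution $I_+$, of magnitude $\Theta(R^2/(C_\ell U'))$ from anti-anti-concentration, strictly dominates the error contribution $I_-$, which is bounded only by $\err_t$ and carries no $\sin^2\theta_t$ savings; this is exactly where the specific threshold $\cerr=R^2/(72C_\ell U')$ enters. A more subtle technical point is justifying $\E[Z_2\mid Z_1]=0$ from rotational symmetry alone, since orthogonal coordinates of a rotationally symmetric distribution need not be independent; one must invoke invariance of $\zb$ under the reflection within the $(d-1)$-dimensional orthogonal complement of $\bbeta_t$.
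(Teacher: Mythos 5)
Your proof is correct and follows essentially the same route as the paper: project the gradient onto $\tilde\bmu = (I-\bbeta_t\bbeta_t^\top)\bar\bmu$, reduce to one cluster by symmetry, kill the cross term so that both the favorable term and the error term carry the $\sin^2\theta_t$ factor, lower bound the favorable term via $-\ell'(z)\ge e^{-z}/C_\ell$ together with $(U',R)$-anti-anti-concentration, and absorb the error term using $\err_t\le\cerr$. The only (harmless) deviation is that you bound the favorable one-dimensional expectation by a uniform $e^{-2}/C_\ell$ factor on the radius-$R$ disk rather than via the paper's polar-coordinate integral, which in fact yields a slightly better constant, and you sensibly interpret the appendix statement's $\bar\bmu$ as the unnormalized $\bmu$ (matching the main-text lemma), which is the reading consistent with the $\norm{\bmu}^2$ on the right-hand side.
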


\begin{proof}[Proof of Lemma \ref{lemma:innerproduct_grad_both}]
Since $\norm {\bbeta_t}=1$, using the gradient formula~\eqref{eq:gradient.formula},
\begin{align*}
-\EE[\nabla \hat {\lunsupt t}(\bbeta_t)] = \EE_{\xb \sim \calD_x}\big[-\ell' \big(|\sip{\bbeta_t}{\xb }|\big)\cdot\sgn(\sip{\bbeta_t}{\xb})\cdot\big(I-\bbeta_t\bbeta_t^\top\big)\xb \big].
\end{align*}
Denote
\begin{equation}\nonumber 
    \bar \bmu := \f{\bmu}{\norm{\bmu}},\quad \tilde \bmu_t := (I-\bbeta_t\bbeta_t^\top) \bar \bmu.
\end{equation}
We can then write
\begin{align*}
\sip{\bmu}{-\EE\nabla  \hat {\lunsupt t}(\bbeta_t)} = \norm{\bmu} \EE\big[-\ell' \big(|\sip{\bbeta_t}{\xb}|\big)\cdot\sgn(\sip{\bbeta_t}{\xb})\cdot\tilde \bmu^\top \xb\big].
\end{align*}
Define the event
\begin{equation} \nonumber 
    S_t := \{y = \sgn(\sip{\bbeta_t}{\xb}) \}.
\end{equation}
Then we can calculate
\begin{align}\nonumber 
&\E[ -\ell' \big(|\sip{\bbeta_t}{\xb}| / \sigma\big)\cdot\sgn(\sip{\bbeta_t}{\xb})\cdot\tilde \bmu ^\top \xb] \\
&= \E[ -\ell' \big(|\sip{\bbeta_t}{y\xb}| / \sigma\big)\tilde \bmu^\top (y\xb)\ind(S_t)] + \E[\ell' \big(|\sip{\bbeta_t}{y\xb}| / \sigma \big)\tilde \bmu ^\top (y\xb)\ind( S_t^c)]\notag\\
& = \E [-\ell' \big(|\sip{\bbeta_t}{y\xb}| / \sigma \big)\tilde \bmu^\top (y\xb)] + 2 \E [\ell' \big(|\sip{\bbeta_t}{y\xb}| / \sigma\big)\tilde \bmu^\top (y\xb)\ind( S_t^c)].\label{eq:l'.decomp}
\end{align}
In the remainder of the proof, we will derive a lower bound on the first quantity and an upper bound on the absolute value of the second quantity.

We proceed with the lower bound for the first term as follows.  Since the quantity depends only on the projection of $\zb$ onto the space spanned by $\bbeta_t$ and $\bar \bmu$, we work in this two dimensional space.  Since $\zb$ is rotationally invariant, we can rotate the coordinate system so that $\bbeta_t=e_2$, $\bar \bmu=(\sin \theta_t, \cos \theta_t)$, and $\tilde \bmu = (\sin \theta_t, 0)$, where $\theta_t$ is the angle between $\bbeta_t$ and $\bar \bmu$.  Denote $p_{\bbeta_t, \bmu}(\cdot, \cdot): \R^2 \to [0,\infty)$ as the probability density function of the projection of $\zb$ onto the 2D subspace spanned by $\bbeta_t$ and $\bmu$.  Then, using that $y\xb$ has the same distribution as $\zb + \bmu$,
\begin{align} \nonumber 
&\EE[-\ell' \big(|\sip{\bbeta_t}{y\xb}| / \sigma \big)\tilde \bmu^\top (y\xb)]\\  \nonumber 
& = \int_{-\infty}^\infty \int_{-\infty}^\infty -\ell'(|u_2 + \|\bmu\|\cos \theta_t| / \sigma ) \cdot \sin \theta_t \cdot (u_1+\|\bmu\|\sin \theta_t) \cdot p_{\bbeta_t, \bar \bmu}(u_1, u_2)  \dx u_1 \dx u_2 \\ \nonumber 
&=  \int_{-\infty}^\infty  -\ell'(|u_2 + \|\bmu\|\cos \theta_t| / \sigma ) \cdot \sin \theta_t \cdot \bigg[\int_{-\infty}^\infty(u_1+\|\bmu\|\sin \theta_t )\cdot p_{\bbeta_t, \bar \bmu}(u_1, u_2)  \dx u_1 \bigg]\dx u_2 \\
&\overset{(i)}=\norm \bmu \sin^2 \theta_t \int_{-\infty}^\infty \int_{-\infty}^\infty  -\ell'(|u_2 + \|\bmu\|\cos \theta_t| / \sigma )p_{\bbeta_t, \bar \bmu}(u_1, u_2) \dx u_1  \dx u_2.\label{eq:temperature.lowerbound.init}
\end{align}
In $(i)$ we use the fact that $\zb$ isotropic implies that the projection of $\zb$ onto one dimensional subspaces are mean zero, and thus for all $u_{2}$ we have $\int_{-\infty}^\infty u_1\cdot p_{\bbeta_t, \bar \bmu}(u_1, u_2)  \dx u_1 = 0$.  We now calculate a lower bound on the remaining quantity.   Since $\zb$ is rotationally invariant, we know that $p_{\bbeta_t,\bar \bmu}(u_1, u_2)$ depends only on the distance from the origin $\sqrt{u_1^2+u_2^2}$.  Thus, we can convert to polar coordinates and using an abuse of notation write $p_{\bbeta_t, \bar \bmu}(r)$ to emphasize that the density only depends on the distance $r$ from the origin in polar coordinates.  Continuing, this means we can write

\begin{align} \nonumber
&\int_{-\infty}^\infty  -\ell'(|u_2 + \|\bmu\|\cos \theta_t| / \sigma )p_{\bbeta_t, \bar \bmu}(u_1, u_2) \dx u_1  \dx u_2.
 \\  \nonumber 
 &=\int_{r=0}^\infty r p_{\bbeta_t, \bar \bmu}(r) \int_{\phi=-\pi}^\pi -\ell'(|r\cos\phi+ \norm \bmu \cos\theta_t | / \sigma) \dx r \dx \phi\\ \nonumber 
    &\overset{(i)}{\geq} \int_{r=0}^\infty   r p_{\bbeta_t, \bar \bmu}(r)  \int_{\phi=0}^{\pi/2}-\ell'(|r\cos\phi+ \norm \bmu \cos\theta_t | / \sigma ) \cdot \sin \phi \cdot \dx r \dx \phi\\ \nonumber 
    &\overset{(ii)}{\geq} \int_{r=0}^\infty r p_{\bbeta_t, \bar \bmu}(r)  \int_{\phi=0}^{\pi/2}\f 1 {C_\ell} \exp (- |r\cos\phi+ \norm \bmu \cos\theta_t| / \sigma ) \cdot \sin \phi \cdot \dx r \dx \phi\\ \nonumber 
    &\overset{(iii)}= \f{  \exp(- \f {\norm \bmu}\sigma \cos \theta_t )}{C_\ell} \int_{r=0}^\infty r p_{\bbeta_t, \bar \bmu}(r)  \int_{\phi=0}^{\pi/2} \exp (-r \cos (\phi) / \sigma ) \cdot \sin \phi \cdot \dx r \dx \phi\\ \nonumber 
    &\overset{(iv)}= \f{ \sigma  \exp(- \f{\norm \bmu}\sigma  \cos \theta_t)}{C_\ell} \int_{r=0}^\infty  p_{\bbeta_t, \bar \bmu}(r)  (1 - \exp(-r/\sigma)) \cdot \dx r  \\ \nonumber 
    &\overset{(v)}{\geq} \f {\sigma \exp(-\f {\norm \bmu}\sigma \cos \theta_t))}{ C_\ell U'} \int_0^R [1-\exp(-r/\sigma )] \dx r \\  \nonumber 
    &\overset{(vi)}\geq \f{ \sigma  \exp(- \f{\norm \bmu}\sigma \cos \theta_t)}{ 2C_\ell U'} \int_0^R \f {r}{\sigma} \dx r \\
    &= \f{ R^2 \exp(- \f{\norm \bmu}{\sigma} \cos \theta_t)}{ 6C_\ell  U'}.\label{eq:temperature.lowerbound.intermediate}
\end{align}
In $(i)$ we use that $\ell$ is decreasing and hence $-\ell'\geq 0$, as well as $\sin\phi \in [0,1]$ for $\phi\in [0,\pi/2]$.  In $(ii)$ we use Definition \ref{assumption:loss}.   In $(iii)$ we have used the assumption that $\theta_t \in [0,\pi/2]$ and that $\cos \theta_t \geq 0$ for $\theta_t\in [0,\pi/2]$.  In $(iv)$ we use that $\int_0^{\pi/2} \exp(-a \cos x) \sin x \dx x = (1 - \exp(-a))/a$.   In $(v)$ we have used the definition of anti-anti-concentration.   
In $(vi)$ we use that $\sigma \geq R$ and that $1-\exp(-x)\geq x/2$ on $[0,1]$.    Putting \eqref{eq:temperature.lowerbound.init} together with \eqref{eq:temperature.lowerbound.intermediate}, we get
\begin{equation}
\E[-\ell'(|\sip{\bbeta_t}{y\xb}|) \tilde \bmu^\top(y\xb)] \geq \f{ R^2 \norm \bmu \exp(-\f{ \norm \bmu}\sigma \cos \theta_t )}{6 C_\ell U'} \cdot \sin^2(\theta_t).
    \label{eq:temperature.lowerbound}
\end{equation}
We now want an upper bound on the second term in \eqref{eq:l'.decomp}.  Using the same coordinate system defined in terms of $\bbeta_t=e_2$ and $\bar \bmu=(\sin \theta_t, \cos \theta_t)$, we have that
\[ S_t^c = \{ \sip{\bbeta_t}{y\xb} < 0 \} = \{ \sip{\bbeta_t}{\zb + \bmu} < 0 \} = \{ u_2 + \norm \bmu \cdot \cos \theta_t < 0 \}.\]
Thus, 
\begin{align}\nonumber
&\E\l[  \ell'(|\sip{\bbeta_t}{y\xb}| / \sigma )  \tilde \bmu^\top y\xb \cdot  \ind(S_t^c)\r]\\  \nonumber 
&=\int_{-\infty}^\infty \int_{-\infty}^\infty \ell'(|u_2 + \|\bmu\|\cos \theta_t| / \sigma ) \cdot (u_1 \sin \theta_t +\|\bmu\|\sin^2 \theta_t )\ind(u_2 + \|\bmu\|\cos \theta_t \leq 0) \cdot p_{\bbeta_t, \bar \bmu}(u_1, u_2)  \dx u_1 \dx u_2\\ \nonumber 
&= \int_{-\infty}^\infty  \ell'(|u_2 + \|\bmu\|\cos \theta_t| / \sigma) \cdot \ind(u_2 + \|\bmu\|\cos \theta_t \leq 0) \bigg[\int_{-\infty}^\infty(u_1 \sin \theta_t +\|\bmu\|\sin^2 \theta_t )\cdot p_{\bbeta_t, \bar \bmu}(u_1, u_2)  \dx u_1 \bigg]\dx u_2 \\  \nonumber 
&\overset{(i)}=\norm \bmu \sin^2  \theta_t\int_{-\infty}^\infty  \ell'(|u_2 + \|\bmu\|\cos \theta_t| / \sigma )  \cdot \ind(u_2 + \norm \bmu \cos \theta_t \leq 0) \int_{u_1=-\infty}^\infty  p_{\bbeta_t, \bar \bmu}(u_1, u_2) \dx u_1  \dx u_2.
\end{align}
In $(i)$ we use the fact that $\zb$ isotropic implies that for all $u_{2}$ we have $\int_{-\infty}^\infty u_1\cdot p_{\bbeta_t, \bar \bmu}(u_1, u_2)  \dx u_1 = 0$.  Using $|\ell'(z)|\leq 1$ on $[0,\infty)$, we can therefore bound
\begin{align}
    \l| \E\l[  \ell'(|\sip{\bbeta_t}{y\xb}| / \sigma )  \tilde \bmu^\top y\xb \cdot  \ind(S_t^c)\r] \r|\leq \norm \bmu \sin^{2} \theta_t \cdot\P(S_t^c) =\norm \bmu \sin^{2} \theta_t  \cdot \err_t .\label{eq:l'.ub.temperature}
\end{align}
We then return to \eqref{eq:l'.decomp}. By combining \eqref{eq:l'.ub.temperature} with \eqref{eq:temperature.lowerbound}, we see that
\begin{align} \nonumber 
\sip{\bmu}{-\EE [\nabla \hat {\lunsupt t}(\bbeta_t)]} &=  \f{\norm \bmu}\sigma \E[ -\ell' \big(|\sip{\bbeta_t}{y\xb}| / \sigma \big)\tilde \bmu^\top (y\xb)]  + \f{ 2\norm \bmu}{\sigma} \E\l[\ell' \big(|\sip{\bbeta_t}{y\xb}| / \sigma \big)\tilde \bmu^\top (y\xb)\ind(S_t^c)\r]\\  \nonumber 
&\geq \f {\norm \bmu}{\sigma} \l[ \f{ R^2 \norm \bmu \exp(-\f{ \norm \bmu}\sigma \cos \theta_t )}{6 C_\ell U'} \cdot \sin^2(\theta_t) \r] - 2 \f{ \norm \bmu}{\sigma} \l[ \norm \bmu \sin^{2} \theta_t  \cdot \err_t \r] \\ \nonumber 
&= \f{ \norm \bmu^2 \cdot \sin^2 \theta_t }{\sigma} \l[ \f{ R^2 \exp(- \f{\norm \bmu}\sigma \cos \theta_t)}{6 C_\ell U'} - 2 \err_t\r] \\  \nonumber 
&\geq \f{ \norm \bmu^2 \cdot \sin^2 \theta_t }{\sigma} \l[ \f{ R^2  \exp(- \f{\norm \bmu}\sigma) }{6 C_\ell U'} - 2 \err_t\r].
\end{align}
Thus, by choosing $\sigma \geq \norm \bmu$, we have 
\begin{align} \nonumber 
\sip{\bmu}{-\EE [\nabla \hat {\lunsupt t}(\bbeta_t)]}&\geq  \f{ \norm \bmu^2 \cdot \sin^2 \theta_t }{\sigma} \l[ \f{ R^2  \exp(-1) }{6 C_\ell U'} - 2 \err_t\r] \\ \nonumber 
&\geq \f{ \norm \bmu^2 \cdot \sin^2 \theta_t }{\sigma} \l[ \f{ R^2 }{18 C_\ell U'} - 2 \err_t\r].
\end{align}
In particular, if 
\[ \err_t \leq \f {R^2}{72 C_\ell U'} =: \cerr, \]
then we have
\begin{equation} \nonumber 
    \sip{\bmu}{-\EE [\nabla \hat {\lunsupt t} (\bbeta_t)]} \geq \f{ \norm \bmu^2 R^2 }{36 \sigma C_\ell U'} \sin^2 \theta_t.
\end{equation}

\end{proof}

\subsection{Proof of Lemma \ref{lemma:innerproduct_grad_both}, batch of samples}
We now prove the second part of Lemma~\ref{lemma:innerproduct_grad_both}, where we show that provided the batch size is large enough, then the same lower bound that holds from the expected value holds when using finite samples.
\begin{lemma}[Lemma \ref{lemma:innerproduct_grad_both}, batch of samples]
Let $(\xb, y)\sim \calD$ be a mixture model with mean $\bmu\in \R^d$ and parameters $K, U, U', R>0$.   Let $\ell$ be well-behaved for some $C_\ell\geq 1$ and assume the temperature satisfies $\sigma \geq R \vee \norm \bmu$. Suppose that $\theta_t\in [0,\pi/2]$ is the angle between $\bmu$ and $\bbeta_t$ where $\norm {\bbeta_t}=1$.  Then there exists a universal constant $C_B>0$ such that for any $\eps, \delta\in (0,1)$,
\[ B \geq C_B \l( \f{ K C_\ell U'}{R^2}\r)^2 \eps^{-1} \log(2/\delta),\]
then with probability at least $1-\delta$,
\[ \sip{\bmu}{-\nabla \hat {\lunsupt t}(\bbeta_t)} \geq \f{ R^2 \norm{\bmu}^2}{72 \sigma C_\ell U'}  \sin^2\theta_t - \eps/2.\]
\end{lemma}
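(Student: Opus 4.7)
The plan is to view $\sip{\bmu}{-\nabla\hat\lunsupt t(\bbeta_t)}$ as an empirical average of $B$ i.i.d.\ sub-exponential random variables and apply Bernstein's inequality to establish concentration around the expectation that was already lower-bounded in the first part of the lemma.

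First I would, using formula~\eqref{eq:gradient.formula} together with $\norm{\bbeta_t}=1$, rewrite the inner product as
\[
\sip{\bmu}{-\nabla\hat\lunsupt t(\bbeta_t)} \;=\; \f{1}{B}\sum_{i=1}^B Y_i,\quad Y_i := -\sigma^{-1}\ell'\!\bigl(\sigma^{-1}|\sip{\bbeta_t}{\xit it}|\bigr)\,\sgn(\sip{\bbeta_t}{\xit it})\,\sip{\tilde\bmu}{\xit it},
\]
where $\tilde\bmu := (I-\bbeta_t\bbeta_t^\top)\bmu$ has $\norm{\tilde\bmu}=\norm\bmu\sin\theta_t$. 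The $Y_i$ are i.i.d., and by the first part of the lemma they satisfy $\E[Y_i] \geq (R^2\norm\bmu^2)/(36\sigma C_\ell U')\cdot\sin^2\theta_t$.

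Next I would bound the sub-exponential norm $\nu$ of each $Y_i$. The well-behaved property of $\ell$ gives $|\ell'(z)|\leq 1$ for $z\geq 0$ (from $1$-Lipschitzness), so $|Y_i|\leq \sigma^{-1}|\sip{\tilde\bmu}{\xit it}|$. Writing $\xit it = y_i\bmu + \zit it$, the $K$-sub-exponential property of $\zit it$ gives that $\sip{\tilde\bmu}{\zit it}$ is sub-exponential with norm $K\norm{\tilde\bmu}=K\norm\bmu\sin\theta_t$, while the deterministic contribution $y_i\sip{\tilde\bmu}{\bmu}=\pm\norm\bmu^2\sin^2\theta_t$ only shifts the mean. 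Combining these gives $\nu \lesssim K\norm\bmu\sin\theta_t/\sigma$.

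Then I would apply the standard Bernstein tail bound for sums of i.i.d.\ sub-exponential random variables,
\[
\P\!\Bigl(\bigl|\bar Y - \E\bar Y\bigr|\geq t\Bigr)\;\leq\;2\exp\!\Bigl(-cB\min\bigl(t^2/\nu^2,\;t/\nu\bigr)\Bigr),
\]
with the specific choice $t := (R^2\norm\bmu^2)/(72\sigma C_\ell U')\cdot\sin^2\theta_t + \eps/2$, which is exactly the slack needed to convert the expectation lower bound into the conclusion. By the inequality $(a+b)^2\geq 4ab$, one has $t^2 \geq (R^2\norm\bmu^2)/(36\sigma C_\ell U')\cdot\eps\sin^2\theta_t$. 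Dividing by $\nu^2\lesssim K^2\norm\bmu^2\sin^2\theta_t/\sigma^2$ causes the $\sin^2\theta_t$ factors to cancel, yielding $t^2/\nu^2\gtrsim \sigma R^2\eps/(K^2 C_\ell U')$. Substituting into Bernstein shows the tail is smaller than $\delta$ provided $B\geq C_B(KC_\ell U'/R^2)^2\eps^{-1}\log(2/\delta)$ for a suitable absolute constant $C_B$ (the $t/\nu$ regime produces a lighter requirement of order $\eps^{-1/2}$, which is absorbed into the same constant).

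The main obstacle is to handle the two different scalings in $\theta_t$ simultaneously: the expectation lower bound is of order $\sin^2\theta_t$, but each $Y_i$ has sub-exponential norm only of order $\sin\theta_t$. A naive Bernstein application using the crude estimate $\sin\theta_t\leq 1$ would give a sample complexity of order $\eps^{-2}$; the improvement to $\eps^{-1}$ requires keeping the $\sin\theta_t$ factor in $\nu$ and leveraging AM-GM on $t = A\sin^2\theta_t + \eps/2$ so that the $\sin^2\theta_t$ factors cancel between $t^2$ and $\nu^2$. This cancellation is what drives the linear-in-$\eps^{-1}$ batch-size requirement.
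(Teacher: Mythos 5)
Your proposal is correct and reaches the stated bound by a genuinely different (and arguably cleaner) route than the paper. The paper splits $\sip{\bmu}{-\nabla \hat{\lunsupt t}(\bbeta_t)}$ into two empirical averages, $A_t^{(1)}$ built from $\sip{\tilde\bmu}{\zb_i}$ (mean zero, sub-exponential norm $K\sin\theta_t$) and $A_t^{(2)}$ built from $y_i\sip{\tilde\bmu}{\bar\bmu}$ (sub-exponential norm $\norm\bmu(1-\cos\theta_t)\le \norm\bmu\sin^2\theta_t$), applies Bernstein to each, and then handles the two scalings by a case split on $\sin^2\theta_t>\eps$ versus $\sin^2\theta_t\le\eps$: in the first case the $O(\sin\theta_t B^{-1/2})$ fluctuation is absorbed into half of the expectation lower bound (whence $36\to 72$), and in the second it is bounded by $\eps/2$ directly. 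Your single Bernstein application with threshold $t=A\sin^2\theta_t+\eps/2$ and the AM--GM step $(a+b)^2\ge 4ab$ accomplishes exactly the same $\sin\theta_t$-cancellation in one stroke, and you correctly identify that this cancellation is what yields $B=\tilde O(\eps^{-1})$ rather than $\eps^{-2}$; the sub-exponential regime of Bernstein is indeed milder, as you note. Two bookkeeping caveats, neither fatal: (i) your bound $\nu\lesssim K\norm\bmu\sin\theta_t/\sigma$ silently drops the contribution $y_i\norm\bmu^2\sin^2\theta_t$ of the mean term, which is a \emph{random} (not deterministic) shift of magnitude $\norm\bmu^2\sin^2\theta_t$; it should be carried as an additive $\norm\bmu^2\sin\theta_t/\sigma$ in $\nu$, which is harmless when $\norm\bmu=\Theta(1)$ but is exactly the term the paper isolates as $A_t^{(2)}$ (where its extra factor of $\sin\theta_t$ is actually exploited); (ii) your derivation yields a batch-size requirement scaling as $K^2C_\ell U'/(\sigma R^2)\cdot\eps^{-1}\log(2/\delta)$, a different combination of the parameters than the stated $(KC_\ell U'/R^2)^2$, so matching the lemma verbatim requires a short argument (or the standing assumption $K,U',R,\norm\bmu=\Theta(1)$ under which the theorem is applied) that the stated bound dominates yours --- a looseness the paper's own proof shares, since it too invokes $K\ge 1$ and $\norm\bmu\ge 1$ mid-proof without listing them as hypotheses.
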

\begin{proof}[Proof of Lemma \ref{lemma:innerproduct_grad_both}, batch of samples]
Denote by $\zb_i = y_i\xb_i-\bmu$ and $S_t = \{y_i=\sgn(\sip{\bbeta_t}{\xb_i})\}$.  Let $\bar \bmu = \bmu/\norm \bmu$ and $\tilde \bmu = (I - \bbeta_t \bbeta_t^\top)\bar \bmu$.  
Let us define
\begin{align*}
A_t &= \frac{1}{B}\summ i B -\ell' \big(|\sip{\bbeta_t}{\xit it}| / \sigma \big)\cdot\sgn(\sip{\bbeta_t}{\xit it})\cdot\tilde \bmu ^\top \xit it,
\end{align*}
so that $\sip{\bmu}{-\E \nabla \hat {\lunsupt t}(\bbeta_t)} = \f{\norm \bmu}\sigma \E A_t$.  
We can use $\sip{\tilde \bmu}{\xit it} = \sip{\tilde \bmu}{\zb_i + y\bmu}$ to write the above as the sum of two terms:
\begin{align*}
A_t^{(1)} &= \frac{1}{B}\summ i B u_i^{(1)} := \f 1 B \summ i B  -\ell' \big(|\sip{\bbeta_t}{\xit it}|\big)\cdot\sgn(\sip{\bbeta_t}{\xit it})\cdot \sip{\tilde \bmu}{\zb_i},\\
A_t^{(2)} &= \frac{1}{B}\summ i B u_i^{(2)} := \f 1 B \summ i B -\ell' \big(|\sip{\bbeta_t}{\xit it}|\big)\cdot\sgn(\sip{\bbeta_t}{\xit it})\cdot \norm \bmu \cdot y\sip{\tilde \bmu}{\bar \bmu}.
\end{align*}
First note 
\begin{align*}
    \snorm{\tilde \bmu}^2 &= \snorm{\bar \bmu - \bbeta_t \sip{\bbeta_t}{\bar \bmu}}^2 \\
    &= \snorm{\bar \bmu - \bbeta_t \cos \theta_t}^2 \\
    &= 1 + \cos^2 \theta_t - 2 \cos^2 \theta_t \\
    &= \sin^2 \theta_t.
\end{align*}
Since $\zb_i$ are i.i.d. isotropic, each $u_i^{(1)}$ are i.i.d. mean zero random variables with sub-exponential norm at most $\norm {\tilde \bmu} \cdot \pnorm{\sip{ \norm {\tilde \bmu} / \norm{\tilde \bmu}}{\zb}} {\psi_2} \leq  \norm {\tilde \bmu} K=K \sin \theta_t$.   Thus, using sub-exponential concentration~\citep[Proposition 5.16]{vershynin}, there exists a universal constant $C>0$ such that for any $\xi>0$,
\begin{equation} \P(|A_t^{(1)} - \EE A_t^{(1)}| \geq \xi)  \leq 2 \exp\l(-C \min\l( \f{ \xi^2 B}{ K^2 \sin^2 \theta_t }, \f{ \xi B}{K \sin \theta_t} \r)\r). \nonumber 
\end{equation}
By taking $\xi = C K \sin \theta_t \sqrt{\f 1 B \log(2/\delta)}$ for a sufficiently large constant $C>0$, this implies that with probability at least $1-\delta/2$, 
\begin{equation} \label{eq:at1.bound}
|A_t^{(1)} - \EE A_t^{(1)}| \leq C K\sin \theta_t \sqrt{\f 1 B \log(2/\delta)}.
\end{equation}
On the other hand, each of $u_i^{(2)}$ are i.i.d. sub-exponential random variables with mean $\E A_t$ and sub-exponential norm at most $\norm \bmu \pnorm{\sip{\tilde \bmu}{\bar \bmu}}{\psi_2} = \norm \bmu(1- \cos \theta_t)\leq \norm \bmu \sin^2 \theta_t$ (using $\theta_t \in [0,\pi/2]$) and thus for any $\xi>0$,
\begin{equation} \P(|A_t^{(2)} - \E A_t^{(2)}| \geq \xi)  \leq 2 \exp\l(-C \min\l( \f{ \xi^2 B}{ \norm \bmu^2 \sin^4 \theta_t }, \f{ \xi B}{\norm \bmu \sin^2 \theta_t} \r)\r).  \nonumber 
\end{equation}
By taking $\xi = C \norm \bmu  \sin^2 \theta_t \sqrt{\f 1B \log(2/\delta)}$ for $C$ sufficiently large universal constant, we get that with probability at least $1-\delta/2$,
\begin{equation}\label{eq:at2.bound}
|A_t^{(2)} - \E A_t^{(2)}| \leq C \norm \bmu  \sin^2 \theta_t \sqrt{\f 1 B \log(2/\delta)}.
\end{equation}
Putting \eqref{eq:at1.bound} and \eqref{eq:at2.bound} together and applying union bound, we have that with probability at least $1-\delta$,
\begin{align}   \nonumber 
    \sip{\bmu}{- \nabla \hat {\lunsupt t}(\bbeta_t)} &= \f{\norm \bmu}{\sigma}\l[ A_t^{(1)} + A_t^{(2)} \r] \\  \nonumber 
    &\overset{(i)}\geq \f {\norm \bmu}{\sigma} \l[ \E A_t - C K \sin \theta_t \sqrt{\f 1 B \log(2/\delta)} - C \norm \bmu \sin^2 \theta_t \sqrt{ \f 1 B \log(2/\delta)} \r]\\  \label{eq:batch.gradient.init}
    &\overset{(ii)}\geq \f{\norm \bmu}{\sigma} \l[ \f{ \norm \bmu R^2}{36 C_\ell U'} \sin^2 \theta_t - C K \sin \theta_t \sqrt{\f 1 B \log(2/\delta)} - C \norm \bmu \sin^2 \theta_t \sqrt{ \f 1 B \log(2/\delta)} \r].
\end{align} 
In $(i)$ we use \eqref{eq:at1.bound} and~\eqref{eq:at2.bound}.   In $(ii)$ we use Lemma~\ref{lemma:innerproduct_grad_mean_expectation}.  Now, to complete the proof, we consider separately the case that $\sin^2 \theta_t > \eps$ and the case that $\sin^2 \theta_t \leq \eps$.  In the first instance, the batch size satisfies of
\[ B \geq  \l( \f{ 144 K C^2 C_\ell U'}{R^2} \r)^2 \eps^{-1} \log(2/\delta) \geq \l( \f{ 144 K C^2 C_\ell U'}{R^2 \sin \theta_t} \r)^2 \log(2/\delta).\]
Thus using $K \geq 1$ and $\norm \bmu \geq 1$, we have
\begin{align} \nonumber 
    CK \sin \theta_t \sqrt{\f 1 B \log(2/\delta)} &\leq \f {R^2}{144  C_\ell U'} \sin^2 \theta_t \leq \f {\norm \bmu R^2}{144 C_\ell U'} \sin^2 \theta_t,
\end{align}
and
\begin{align} \nonumber 
    C \norm \bmu \sin^2 \theta_t \sqrt{\f 1 B \log(2/\delta)} &\leq \f {\norm \bmu R^2}{144 K C_\ell U'} \sin^3 \theta_t \leq \f {\norm \bmu R^2}{144 C_\ell U'} \sin^2 \theta_t,
\end{align}
Substituting this into~\eqref{eq:batch.gradient.init}, we get
\begin{equation}\label{eq:batch.gradient.final} \sip{\bmu}{-\nabla \hat {\lunsupt t}(\bbeta_t)} \geq\f{\norm \bmu}{\sigma} \l[ \f{ \norm \bmu R^2}{72 C_\ell U'} \sin^2 \theta_t \r] \geq  \f{\norm \bmu}{\sigma} \l[ \f{ \norm \bmu R^2}{72 C_\ell U'} \sin^2 \theta_t - \eps/2. \r] 
\end{equation}
On the other hand, if $\sin^2 \theta_t \leq \eps$, then notice that~\eqref{eq:batch.gradient.init} becomes
\begin{align}
    \sip{\bmu}{-\nabla \hat {\lunsupt t}(\bbeta_t)} &\geq \f{\norm \bmu}{\sigma} \l[ \f{ \norm \bmu R^2}{36 C_\ell U'} \sin^2 \theta_t - C K \eps^{1/2} \sqrt{\f 1 B \log(2/\delta)} - C \norm \bmu \eps \sqrt{ \f 1 B \log(2/\delta)} \r].
\end{align}
Then $B = \Omega(\eps^{-1})$ implies that~\eqref{eq:batch.gradient.final} holds in this case as well.  This completes the proof. 
\end{proof}

\section{Proofs from Section \ref{sec:supervised}}\label{appendix:supervised}
In this section, we prove the following theorem. 
\begin{theorem}[Theorem \ref{thm:supervised.logistic.regression}, restated]
Let $(\xb, y)\sim \calD$ be a mixture distribution with mean $\mu$ and parameters $K, U, U', R>0$.  Let $\cerr>0$ be arbitrary, and assume $\norm \bmu \geq 3K \max(\log(8/\cerr), 22K)$.   By running Algorithm~\ref{alg:logistic} with $\eta = (\norm{\bmu}^2 + d)^{-1} \cerr /8$ and $T = 8 \eta^{-1} \cerr^{-1} \norm{\bmu}^2$ iterations, there exists $i\leq 4 \log(1/\delta)$ and $t<T$ such that with probability at least $1-\delta$, 
\[ \P(y \neq \sgn(\sip{\bti ti}{\xb})) \leq \cerr.\]
\end{theorem}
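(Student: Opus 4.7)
The plan is to invoke the classical regret bound for online stochastic gradient descent on the convex population logistic risk $F(\bbeta) := \E[\ell(y\sip{\bbeta}{\xb})]$ with $\ell(z) = \log(1+\exp(-z))$, and then translate the resulting loss guarantee into a classification-error guarantee that is amplified to high probability by $4\lceil\log(1/\delta)\rceil$ independent repetitions. Because each stochastic gradient $g_t := \nabla \ell(y_t \sip{\bbeta_t}{\xit ti})$ is an unbiased estimator of $\nabla F(\bbeta_t)$ and $F$ is convex, the standard online convex optimization analysis yields, for any fixed comparator $\bbeta^\star$ and any single run,
\begin{equation*}
\frac{1}{T}\sum_{t=0}^{T-1} \E\bigl[F(\bbeta_t)\bigr] - F(\bbeta^\star) \;\leq\; \frac{\|\bbeta^\star\|^2}{2\eta T} + \frac{\eta}{2}\,\max_t \E\|g_t\|^2.
\end{equation*}
Since $|\ell'|\leq 1$, one has $\|g_t\|\leq \|\xit ti\|$ and hence $\E\|g_t\|^2 \leq \E\|\xb\|^2 = \|\bmu\|^2 + d$ using $\xb = \zb+y\bmu$ and isotropy of $\zb$.

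The first substantive step is to exhibit a comparator at which $F$ is already $O(\cerr)$. I will take $\bbeta^\star = c\bmu$ with $c = 1/(K\|\bmu\|)$, so that $\|\bbeta^\star\|^2 = 1/K^2$. Using $y\xb \stackrel{d}{=} \zb+\bmu$ together with the sub-exponential MGF bound applied at the admissible scale $|c|\leq 1/(K\|\bmu\|)$, I obtain
\begin{equation*}
F(\bbeta^\star) \;\leq\; \E\bigl[\exp(-c\|\bmu\|^2 - c\sip{\bmu}{\zb})\bigr] \;\leq\; e\exp(-\|\bmu\|/K).
\end{equation*}
The hypothesis $\|\bmu\|\geq 3K\log(8/\cerr)$ then ensures $F(\bbeta^\star)\leq e(\cerr/8)^3 \leq \cerr/16$, while the hypothesis $\|\bmu\|\geq 66K^2$ makes $\|\bbeta^\star\|^2/(2\eta T) = \cerr/(16K^2\|\bmu\|^2)$ at most $\cerr/16$. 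Substituting $\eta = (\|\bmu\|^2+d)^{-1}\cerr/8$ and $T = 8\eta^{-1}\cerr^{-1}\|\bmu\|^2$ into the regret inequality, the two optimization terms equal $\cerr/16$ each, so $\frac{1}{T}\sum_t \E[F(\bbeta_t)] \leq \cerr/4$.

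The conversion from population logistic loss to population classification error is immediate from $\ell(z)\geq \log(2)\cdot\mathbf{1}(z\leq 0)$, which gives $\err(\bbeta) \leq F(\bbeta)/\log 2$ for every $\bbeta$. In particular, the average expected classification error along a single SGD trajectory is at most $\cerr/(4\log 2)\leq \cerr/2$, so Markov's inequality applied at level $\cerr$ gives that with probability at least $1/2$ (over the samples used in that run) some iterate $\bti{t^\star}{i}$ satisfies $\err(\bti{t^\star}{i})\leq \cerr$. Running $4\lceil\log(1/\delta)\rceil$ independent copies, the probability that every run fails is at most $(1/2)^{4\log(1/\delta)}\leq \delta$, which yields the desired existence of an index pair $(i,t)$ with classification error $\leq \cerr$.

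The main obstacle I anticipate is choosing the comparator $\bbeta^\star$ so as to simultaneously control $\|\bbeta^\star\|^2$ (which governs the optimization-error term $\|\bbeta^\star\|^2/(2\eta T)$) and $F(\bbeta^\star)$ (the unavoidable-loss term). The scaling $c = 1/(K\|\bmu\|)$ is essentially forced by the sub-exponential MGF constraint, and it is this constraint that produces the somewhat unusual margin hypothesis $\|\bmu\|\geq 3K\max(\log(8/\cerr),22K)$: the logarithmic part controls $F(\bbeta^\star)$, and the $22K$ part ensures $\|\bbeta^\star\|^2/(2\eta T)$ is small enough. Once the comparator is fixed, the remainder is bookkeeping to match the exact constants in the statement.
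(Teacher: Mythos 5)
Your proposal follows the same overall skeleton as the paper's proof---an online-to-batch SGD regret bound against a fixed comparator, conversion from logistic loss to 0--1 error via $\ell(z)\geq \ell(0)\ind(z\leq 0)$, and amplification of a constant-probability Markov guarantee by $4\lceil\log(1/\delta)\rceil$ independent runs---but it differs genuinely in the key step of bounding the comparator's population loss. The paper takes the comparator to be $\bmu$ itself and proves (Lemma~\ref{lemma:population.risk.exp.-mu}) that $\E\,\ell(y\sip{\bmu}{\xb})\leq \exp(-\norm\bmu/3K)$ by splitting the expectation over the three regimes $\sip{y\xb}{\bar\bmu}<0$, $\in[0,\gamma)$, and $>\gamma$ and integrating the sub-exponential tail; this costs $\norm{\bbeta^\star}^2=\norm\bmu^2$ in the regret term, which is exactly why $T$ carries the factor $\norm\bmu^2$. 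You instead rescale the comparator to $c\bmu$ with $c=1/(K\norm\bmu)$ so that $\norm{\bbeta^\star}^2=1/K^2$ and the loss bound follows from a one-line MGF computation. This is cleaner and makes the $\norm\bmu^2$ in $T$ pure slack, so both choices of comparator are compatible with the stated parameters.

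Two caveats, both minor. First, the MGF bound $\E[\exp(-c\sip{\bmu}{\zb})]\leq e$ at the exact scale $c\norm\bmu=1/K$ does not follow from the tail definition $\P(|\sip{\bar\vb}{\zb}|\geq t)\leq \exp(-t/K)$ alone: the two-sided exponential with rate $1/K$ meets that tail bound yet has infinite MGF at $\lambda=1/K$. Taking $c=1/(2K\norm\bmu)$ instead gives $\E[\exp(-c\sip{\bmu}{\zb})]\leq 2$ by tail integration and $F(\bbeta^\star)\leq 2\exp(-\norm\bmu/2K)\leq 2(\cerr/8)^{3/2}$, which is still well below $\cerr/16$, so nothing downstream changes. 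Second, your claim that $\norm{\bbeta^\star}^2/(2\eta T)=\cerr/(16K^2\norm\bmu^2)\leq \cerr/16$ needs $K\norm\bmu\geq 1$; this holds under the hypothesis $\norm\bmu\geq 66K^2$ together with the (implicit, and used elsewhere in the paper) normalization $K\geq 1$, but it should be stated. With these two adjustments the argument is complete and correct.
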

To show this theorem, we will first need an upper bound for the classification error that is achieved by the classifier $\xb\mapsto \sgn(\sip {\bar \bmu} \xb)$.  For the standard isotropic Gaussian mixture model $N(y \bmu, I_d)$, it is easy to show that $\P(y \neq \sgn(\sip {\bar \bmu}{\xb})) = \Phi(-\norm \bmu)$, where $\Phi$ is the standard normal CDF.  For sub-exponential mixture models, we have a similar bound.
\begin{lemma}\label{lemma:subexp.mixture.error.ub}
Let $(\xb, y)\sim \calD$ be a mixture model with mean $\bmu$ and parameters $K, U, U', R>0$.   Then we have,
\begin{equation} \nonumber 
    \P\l(y \neq \sgn(\sip{\xb}{\bmu})\r) \leq K \exp(-\norm{\bmu}/K).
\end{equation}
\end{lemma}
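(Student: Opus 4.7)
The plan is to reduce the classification-error probability to a one-dimensional tail bound on a projection of the centered noise $\zb$, and then invoke the $K$-sub-exponential property directly. First, I would rewrite
\[ \P(y \neq \sgn(\sip{\xb}{\bmu})) = \P(\sip{y\xb}{\bmu} < 0).\]
Since $\xb \mid y$ has the same distribution as $\zb + y\bmu$ with $\zb$ independent of $y$, and since the rotational symmetry of $\zb$ implies $y\zb$ has the same distribution as $\zb$ for any $y \in \{\pm 1\}$, the random vector $y\xb$ has the same (unconditional) distribution as $\zb + \bmu$. Setting $\bar{\bmu} := \bmu/\norm{\bmu}$ and pulling the scalar $\norm{\bmu}$ through the inner product, the error therefore equals $\P\bigl(\sip{\zb}{\bar{\bmu}} < -\norm{\bmu}\bigr)$, which is in turn bounded above by $\P\bigl(|\sip{\zb}{\bar{\bmu}}| \geq \norm{\bmu}\bigr)$.

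Applying Definition \ref{def:concentration} to the unit vector $\bar{\bmu}$ at level $t = \norm{\bmu}$ then immediately yields the bound $\exp(-\norm{\bmu}/K)$. Since the lemma states the slightly looser bound $K \exp(-\norm{\bmu}/K)$, any remaining gap can be absorbed without effort: isotropy forces $K$ to be bounded below by a positive constant (via $1 = \E[\sip{\bar{\bmu}}{\zb}^2] \leq 2 K^2$, obtained by tail-integrating the sub-exponential estimate), so $K \geq 1/\sqrt{2}$, and the prefactor of $K$ is therefore at worst a harmless inflation. There is essentially no obstacle in this proof; the single substantive step is the symmetry argument that strips the label $y$ from the event, after which the lemma reduces to a one-line application of the sub-exponential tail definition.
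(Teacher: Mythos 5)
Your reduction is the same one the paper uses: strip the label via $y\xb \overset{d}{=} \zb + \bmu$ (valid since rotational symmetry makes $\zb$ sign-symmetric and $\zb$ is independent of $y$), rewrite the error as $\P(\sip{\zb}{\bar\bmu} < -\norm{\bmu})$, and invoke the $K$-sub-exponential tail of the one-dimensional projection; the paper then produces the $K$ prefactor by integrating the exponential tail bound (which is exactly $\int_{\norm{\bmu}}^{\infty} e^{-t/K}\,dt = K e^{-\norm{\bmu}/K}$), whereas you apply the tail bound directly at level $\norm{\bmu}$.

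The one step that does not hold as written is your final "harmless inflation" claim. Your argument yields $\exp(-\norm{\bmu}/K)$, and you argue this implies the stated bound because isotropy forces $K \geq 1/\sqrt{2}$. But $\exp(-\norm{\bmu}/K) \leq K\exp(-\norm{\bmu}/K)$ requires $K \geq 1$, and $K \geq 1/\sqrt{2}$ does not rule out $K \in [1/\sqrt{2},1)$, in which regime multiplying by $K$ \emph{shrinks} the bound and the implication fails; so as written you have not proved the lemma for all admissible $K$. The fix is immediate and uses an ingredient you already invoked: rotational symmetry makes $\sip{\zb}{\bar\bmu}$ symmetric about zero, so $\P(\sip{\zb}{\bar\bmu} < -\norm{\bmu}) = \frac{1}{2}\P\big(|\sip{\zb}{\bar\bmu}| \geq \norm{\bmu}\big) \leq \frac{1}{2}\exp(-\norm{\bmu}/K) \leq K\exp(-\norm{\bmu}/K)$, since your moment computation gives $K \geq 1/\sqrt{2} > 1/2$. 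With that one-line correction the proof is complete and, if anything, cleaner than the paper's tail-integration argument.
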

\begin{proof}
For simplicity denote $\bar \bmu = \bmu/\norm{\bmu}$.  We have
\begin{align*}
    \P\l(y \neq \sgn(\sip{\xb}{\bmu})\r) &= \P( \sip{y\xb}{\bar \bmu } < 0) \\
    &= \P(\sip{y\xb-\bmu}{\bar \bmu} < - \norm \bmu) \\
    &= \P(\sip{\zb}{\bar \bmu} < - \norm{\bmu}) \\
    &= \int_{-\infty}^{-\norm{\bmu}} \P(\sip{\zb}{\bar \bmu} < -t ) \dx t \\
    &\leq \int_{-\infty}^{-\norm{\bmu}} \exp(-|t|/K) \dx t\\
    &= K \exp(-\norm{\bmu}/K).
\end{align*}
The inequality uses the definition of sub-exponential. 
\end{proof}

The next intermediate result we need will be a characterization of the population loss under a surrogate for the 0-1 loss.  
\begin{lemma}\label{lemma:population.risk.exp.-mu}
Let $\ell$ be 1-Lipschitz, decreasing, with $\ell(z)\leq \exp(-z)$ for $z>0$.  Let $(\xb, y)\sim \calD$ be a mixture model with mean $\bmu$ and parameters $K, U, U', R>0$.   Then
\[ \E_{(x,y)\sim \calD} \ell(y\sip{\bmu}{x}) \leq (1 + \norm \bmu + 2 \norm{\bmu}^2) K \exp(-\norm \bmu/K) + \exp(-\norm \bmu / 2K) + \exp(- \norm \bmu / 2).\]
In particular, provided $\norm{\bmu}\geq 64K^2$, we have
\[ \E_{(\xb,y)\sim \calD} \ell(y \sip \bmu \xb) \leq \exp(-\norm \bmu/ 3K).\]
\end{lemma}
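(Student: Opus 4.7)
The plan is to reduce to a one-dimensional expectation over $W := \sip{\bmu}{\zb}$, partition by the magnitude of $W$, and bound $\ell$ region by region via either its exponential upper bound or 1-Lipschitz continuity. The main obstacle is the second clause, where a polynomial-in-$\norm{\bmu}$ prefactor must be absorbed by slightly weakening the exponential decay rate; this is exactly what the quadratic-in-$K$ regime $\norm{\bmu} \ge 64K^2$ is designed to allow.

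\textbf{Reduction.} Since $\xb = \zb + y\bmu$, we have $y\sip{\bmu}{\xb} = \norm{\bmu}^2 + y\sip{\bmu}{\zb}$, and the rotational symmetry of $\zb$ makes it symmetric about the origin, so $y\sip{\bmu}{\zb}$ has the same law as $W$. Thus it suffices to bound $\E[\ell(\norm{\bmu}^2 + W)]$. Applying the $K$-sub-exponential assumption to the unit vector $\bmu/\norm{\bmu}$ yields $\P(|W| \ge s) \le \exp(-s/(K\norm{\bmu}))$ for all $s \ge 0$. Since $\ell$ is 1-Lipschitz and $\ell(z) \le \exp(-z)$ for $z > 0$, continuity forces $\ell(0) \le 1$, and hence $\ell(z) \le 1 + |z|$ for $z \le 0$.

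\textbf{Three-region split.} Partition on $E_1 = \{W > -\norm{\bmu}^2/2\}$, $E_2 = \{-\norm{\bmu}^2 < W \le -\norm{\bmu}^2/2\}$, and $E_3 = \{W \le -\norm{\bmu}^2\}$. On $E_1$ the argument of $\ell$ exceeds $\norm{\bmu}^2/2$, so $\ell \le \exp(-\norm{\bmu}^2/2) \le \exp(-\norm{\bmu}/2)$ when $\norm{\bmu} \ge 1$. On $E_2$ the argument lies in $(0, \norm{\bmu}^2/2]$, so $\ell \le 1$, while $\P(E_2) \le \exp(-\norm{\bmu}/(2K))$. On $E_3$ the argument is nonpositive, so $\ell \le 1 + \norm{\bmu}^2 + |W|$; a layer-cake identity gives
\[ \E[|W|\ind(|W| \ge \norm{\bmu}^2)] = \norm{\bmu}^2 \P(|W| \ge \norm{\bmu}^2) + \int_{\norm{\bmu}^2}^\infty \P(|W| \ge s)\,ds \le (\norm{\bmu}^2 + K\norm{\bmu})\exp(-\norm{\bmu}/K). \]
Combining the three contributions and absorbing the coefficient $1 + K\norm{\bmu} + 2\norm{\bmu}^2$ into $K(1 + \norm{\bmu} + 2\norm{\bmu}^2)$ (using $K \ge 1$, WLOG) yields the displayed main inequality.

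\textbf{Clean exponential bound under $\norm{\bmu} \ge 64K^2$.} I would show each of the three summands is at most $\exp(-\norm{\bmu}/(3K))/3$, so that their sum is at most $\exp(-\norm{\bmu}/(3K))$. For the dominant summand, use $1 + \norm{\bmu} + 2\norm{\bmu}^2 \le 3\norm{\bmu}^2$ to reduce the claim to $9K\norm{\bmu}^2 \le \exp(2\norm{\bmu}/(3K))$; this follows because $f(x) := 2x/(3K) - \log(9Kx^2)$ has $f'(x) = 2/(3K) - 2/x \ge 0$ for $x \ge 3K$, and $f(64K^2) = 128K/3 - \log 9 - 5\log K - 2\log 64 \ge 0$ for every $K \ge 1$. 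The remaining two summands reduce respectively to $\norm{\bmu} \ge 6K\log 3$ and $\norm{\bmu}(1/2 - 1/(3K)) \ge \log 3$, both trivially implied by $\norm{\bmu} \ge 64K^2 \ge 64$. The only subtle step is the $f$-monotonicity check, which pins down precisely why the $\Omega(K^2)$ lower bound on $\norm{\bmu}$ is needed.
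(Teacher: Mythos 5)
Your proof is correct and follows essentially the same route as the paper: a three-way split along the mean direction, with the misclassified region $\{\sip{\bmu}{\zb}\le -\norm{\bmu}^2\}$ handled via the 1-Lipschitz bound $\ell(z)\le 1+|z|$ plus sub-exponential tail integration (your layer-cake step is the paper's tail integral), a middle band bounded by its probability $\exp(-\norm{\bmu}/2K)$, and a well-separated region bounded by $\exp(-\norm{\bmu}/2)$, with your explicit verification of the $\norm{\bmu}\ge 64K^2$ clause somewhat more careful than the paper's. The only caveat is that your ``WLOG $K\ge 1$'' (and the implicit $\norm{\bmu}\ge 1$ used to weaken $\exp(-\norm{\bmu}^2/2)$) is not literally without loss of generality for the stated constants when $K<1$, but the paper's own argument makes the same implicit assumptions ($K,\norm{\bmu}=\Theta(1)$ with $K\ge1$ used elsewhere), so this is cosmetic rather than a gap.
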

\begin{proof}
Denote $\bar \bmu = \bmu/\norm{\bmu}$ for simplicity, and let $\gamma = 1/2$.   Our proof uses an argument similar to~\citet[Lemma 5.9]{frei2020halfspace} and~\citet[Lemma 2.7]{zoufrei2021adversarial}, where we decompose 
\begin{align}\nonumber 
    \E\ell(\sip{y\xb}{\bmu}) &= \E[\ell(\sip{y\xb}{\bmu}) \ind(\sip{y\xb}{\bar \bmu} < 0) ]\\ \nonumber
    &\quad + \E[\ell(\sip{y\xb}{\bmu}) \ind(\sip{y\xb}{\bar \bmu} \in [0,\gamma) )] \\
    &\quad + \E[\ell(\sip{y\xb}{\bmu}) \ind(\sip{y\xb}{\bar \bmu} > \gamma )]. \label{eq:surrogate.vs.zeroone.decomposition}
\end{align}
We first bound the first term.  Denote by $\opt$ the classification error using $\bmu/\norm \bmu$,
\[ \opt = \P(y\neq \sgn(\sip{\xb}{\bmu / \norm \bmu)}) \leq K \exp(-\norm \bmu / K),\]
where the inequality follows by Lemma \ref{lemma:subexp.mixture.error.ub}.  Let $\xi = 2 \norm{\bmu}$.  We have
\begin{align}  \nonumber 
    \E[\ell(\sip{y\xb}{\bmu}) \ind(\sip{y\xb}{\bar \bmu} < 0)] &\overset{(i)}\leq \E[(1 + |\sip{y\xb}{\bmu}|) \ind(\sip{y\xb}{\bar \bmu} < 0) ] \\ \nonumber 
    &= \opt + \norm{\bmu} \E[|\sip{y\xb}{\bar \bmu}| \ind(\sip{y\xb}{\bar \bmu}<0,\ |\sip{y\xb}{\bar \bmu}| \leq \xi) \\ \nonumber  \nonumber  \nonumber 
    &\quad + \norm{\bmu} \E[|\sip{y\xb}{\bar \bmu}| \ind(\sip{y\xb}{\bar \bmu}<0,\ |\sip{y\xb}{\bar \bmu}| > \xi)\\ \nonumber  \nonumber 
   &\leq (1 + \norm{\bmu} \xi) \opt + \norm{\bmu} \E[|\sip{y\xb}{\bar \bmu}| \ind(\sip{y\xb}{\bar \bmu} > \xi)]\\ \nonumber 
    &= (1 + \norm{\bmu} \xi ) \opt + \norm\bmu \int_{\xi }^{\infty} \P(\sip{y\xb}{\bar \bmu} > t) \dx t \\ \nonumber
    &= (1 + \norm{\bmu} \xi ) \opt + \norm\bmu \int_{\xi }^{\infty} \P(\sip{y\xb - \bmu}{\bar \bmu} > t - \bmu) \dx t \\ \nonumber 
    &\overset{(ii)}\leq (1 + \norm{\bmu} \xi ) \opt + \norm\bmu \int_{\xi }^{\infty} \exp(-(t-\norm{\bmu})/K)  \dx t \\ \nonumber 
    &=  (1 + \norm{\bmu} \xi ) \opt + K \norm\bmu \exp((\norm{\bmu}-\xi )/K)\\ \label{eq:convex.surrogate.01.firstterm}
    &= (1 + 2 \norm{\bmu}^2 ) \opt + K \norm{\bmu}\exp(-\norm{\bmu} / K).
\end{align}
In $(i)$ we use Cauchy--Schwarz and that $\ell$ is 1-Lipschitz and decreasing.  In $(ii)$ we use that $t \geq \xi \geq \norm{\bmu}$ and the definition of sub-exponential.

For the second term of \eqref{eq:surrogate.vs.zeroone.decomposition}, we have
\begin{align} \nonumber 
    \E[\ell(\sip{y\xb}{\bmu}) \ind(\sip{y\xb}{\bar \bmu} \in [0,\gamma) )] &\leq \ell(0) \P(\sip{y\xb}{\bar \bmu} \in [0,\gamma)) \\ \nonumber 
    &\leq \P(\sip{y\xb - \bmu}{\bar \bmu} \in [-\norm\bmu, -\norm \bmu + \gamma) ) \\ \nonumber 
    &= \P(\sip{y\xb-\bmu}{\bar \bmu} \leq -\norm \bmu + \gamma) - \P(\sip{y\xb - \bmu}{\bar \bmu} \leq - \norm \bmu) \\ \nonumber 
    &\overset{(i)}\leq \P((\sip{y\xb-\bmu}{\bar \bmu} \leq -\f 12 \norm{\bmu})\\
    \label{eq:convex.surrogate.01.secondterm}
    &\overset{(ii)}\leq \exp(-\norm{\bmu}/2 K).
\end{align}
where in $(i)$ we use $\gamma < 1 \leq \f 1 2 \norm{\bmu}$ and in $(ii)$ we have used the definition of sub-exponential.  

Finally, for the third term of \eqref{eq:surrogate.vs.zeroone.decomposition}, we use that $\ell$ is decreasing and has exponential tail so that
\begin{equation} \label{eq:convex.surrogate.01.thirdterm}
\E [\ell(\sip{y\xb}{\bmu}) \ind(\sip{y\xb}{\bar \bmu} > \gamma)] \leq \ell(\norm{\bmu}) \leq \exp(-\gamma \norm{\bmu}) = \exp(-\norm{\bmu}/2).
\end{equation} 
Putting \eqref{eq:convex.surrogate.01.firstterm}, \eqref{eq:convex.surrogate.01.secondterm}, and \eqref{eq:convex.surrogate.01.thirdterm} all together, we get
\begin{align} \nonumber 
    \E[\ell(y \sip{\bmu}{x})] &\leq (1 + K \norm{\bmu}^3 ) \opt + K \norm{\bmu}\exp(-\norm{\bmu}^2/2) + \exp(-\norm{\bmu}/(2K)) + \exp(-\norm{\bmu}/2) \\ \nonumber 
    &\leq (1 + 2 \norm{\bmu}^2) K \exp(-\norm{\bmu} / K) + K \norm{\bmu}\exp(-\norm{\bmu}^2/2) + 2\exp(-\norm{\bmu}/(2K)) \\ \nonumber 
    &\overset{(i)}\leq (3K+2) \exp(-\norm \bmu/2K) \\ \nonumber 
    &\overset{(ii)}\leq  \exp(-\norm \bmu/3K).
\end{align}
In $(i)$ we use that $x/\log x \geq \sqrt x$ and thus $2\norm \bmu^2 \exp(-\norm \bmu / K) = \exp(-\f 1 k \norm \bmu + 4 \log \norm \bmu) \leq \exp(-\norm \bmu / 2K)$ for $\norm \bmu \geq 64K^2$, and in $(ii)$ we again use that $\norm \bmu \geq 64K^2$.  
\end{proof}

With the above in hand we can complete the proof of Theorem~\ref{thm:supervised.logistic.regression}.  We will show that provided the means of the mixture model are sufficiently well-separated (by an absolute constant), then the population risk under the convex surrogate $\ell$ can be as small as $\Theta(\cerr)$.  This leads to an upper bound for the classification error using supervised learning that is at most $\cerr$.  
\begin{proof}[Proof of Theorem \ref{thm:supervised.logistic.regression}]
Fix $i\in \{1,\dots, \log(1/\delta)\}$ as given in Algorithm \ref{alg:logistic}.  As the cross-entropy loss is convex and 1-Lipschitz, and as $\E[\norm{\xb}^2] \leq 2 \norm{\bmu}^2 + 2 \E[\norm{\zb}^2] = 2(\norm \bmu^2 + d)$, by~\citet[Lemma C.1]{frei2020halfspace}, for $\eta \leq (\norm{\bmu}^2+d)^{-1} \eps/4$, we know there exists $t_i<T = 4\eta^{-1} \eps^{-1} \norm{\bmu}^2$ such that $\E[\ell(y\sip{\bbeta{t_i}}{\xb}] \leq \E [\ell(y \sip{\bmu}{\xb})] + \eps/2$.  By Markov's inequality, for each $i$, with probability at least $1-\f{1}{1+\delta_0}$ over $\{(\xit ti, y_t^{(i)})\}_{t=0, \dots, T}$, $\E[\ell(y\sip{\bbeta_{t_i}}\xb)]\leq (1 + \delta_0) [\E \ell(y \sip{\bmu}{\xb}) + \eps]$.  As the $\{(\xit ti, y_t^{(i)})\}$ are independent for different $i$, the probability of failure for $I$ independent such $i$ is $[1/(1+\delta_0)]^I$.  As $1/x \leq 1 / \log(1+x)\leq 2/x$ on $[0,1]$, this implies that as long as $I \geq 2 \delta_0^{-1} \log(1/\delta)$, then with probability at least $1-\delta$, there exists $i\in I$ such that $\E[\ell(y\sip{\bbeta_{t_i}}\xb)]\leq (1 + \delta_0) [\E \ell(y \sip{\bmu}{\xb}) + \eps]$.  In particular, for $I = 4 \ceil{\log(1/\delta)}$, we have with probability at least $1-\delta$, for some $i$ and $t_i<T$,
\begin{equation} \nonumber 
    \E \ell(y\sip{\bbeta_{t_i}}\xb) \leq 2 \E \ell(y \sip{\bmu}{\xb}) + \eps.
\end{equation}
By Lemma \ref{lemma:population.risk.exp.-mu}, we know that for $\norm{\bmu}\geq 64K^2$, we have
\[ \E \ell(y\sip{\bmu}{\xb}) \leq \exp(-\norm \bmu/3K).\]
To guarantee $2\exp(-\norm{\bmu}/3K) \leq \cerr \log(2) / 2$ it suffices to take $\norm \bmu \geq 3K \log(8 / \cerr)$.  Thus, provided $\norm \bmu \geq 3K \max(\log(8/\cerr), 22K)$, we have that with probability at least $1-\delta$,
\[ \P(y \neq \sgn(\sip{\bbeta_{t_i}}{\xb})) \leq \f 1 {\ell(0)} \E \ell(y \sip{\bbeta_{t_i}}\xb) \leq \f 1 2 \cerr + \eps.\]
Taking $\eps = \cerr/2$ completes the proof.
\end{proof}

\section{Bayes-optimal Classifier for Mixture Distributions}\label{appendix:bayes.opt}
Here we prove a more general version of Fact~\ref{fact:bayes.opt} that relies only upon rotational symmetry and unimodality. 
\begin{fact}\label{fact:bayes.opt.appendix}
Let $\bmu \in \R^d$.  Suppose $\zb$ is continuous, rotationally symmetric and unimodal in the sense that its density function $p_{\zb}(\zb)$ is a decreasing function of $\norm{\zb}_2$.  Assume $Y\sim \Unif(\{1,-1\})$ and $\xb|Y=y \sim \zb + y \bmu$.  Then the Bayes-optimal classifier is given by $\xb \mapsto \sgn(\sip{\bmu}{\xb})$. 
\end{fact}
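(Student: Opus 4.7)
The plan is to reduce the Bayes-optimal decision rule to a simple geometric comparison by successively unpacking Bayes' rule, the structure of the mixture, and the unimodality assumption. The Bayes-optimal classifier minimizes the 0-1 risk by returning $\arg\max_{y\in\{\pm 1\}} \P(Y=y \mid \xb)$; since $Y$ is uniform on $\{\pm 1\}$, Bayes' rule tells us that this is equivalent to returning $\arg\max_{y\in\{\pm 1\}} p_{\xb \mid Y=y}(\xb)$. By the generative model $\xb \mid Y=y \sim \zb + y\bmu$, we have $p_{\xb\mid Y=y}(\xb) = p_{\zb}(\xb - y\bmu)$, so the Bayes-optimal classifier predicts $+1$ on $\xb$ if and only if $p_{\zb}(\xb - \bmu) \geq p_{\zb}(\xb + \bmu)$.

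Next I would invoke the unimodality hypothesis: since $p_{\zb}(\cdot)$ is a decreasing function of the Euclidean norm of its argument, the inequality $p_{\zb}(\xb-\bmu) \geq p_{\zb}(\xb+\bmu)$ is implied by $\norm{\xb-\bmu}_2 \leq \norm{\xb+\bmu}_2$. (On the set where strict inequality $\norm{\xb-\bmu}_2 < \norm{\xb+\bmu}_2$ holds, we get strict dominance of the posterior for label $+1$; on the complementary set the two posteriors are equal and either label is Bayes-optimal, so we may pick $+1$ by convention.) Expanding $\norm{\xb \pm \bmu}_2^2 = \norm{\xb}_2^2 \pm 2\sip{\xb}{\bmu} + \norm{\bmu}_2^2$ shows that $\norm{\xb-\bmu}_2 \leq \norm{\xb+\bmu}_2$ is equivalent to $\sip{\xb}{\bmu} \geq 0$. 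Therefore the Bayes-optimal classifier coincides with $\xb \mapsto \sgn(\sip{\bmu}{\xb})$ (up to a measure-zero set where both labels are equally optimal), which is the claim.

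There is essentially no obstacle here: every step is an equivalence or a straightforward implication, and rotational symmetry is not needed for the argument beyond being implicit in the statement that $p_\zb$ depends only on $\norm{\zb}_2$. The only minor subtlety is the handling of the indifference set $\{\xb : \sip{\xb}{\bmu} = 0\}$, where the posterior $\P(Y=1\mid \xb) = 1/2$ and any classifier is Bayes-optimal; this is standard and can be dispatched in a single sentence.
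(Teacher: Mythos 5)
Your proposal is correct and follows essentially the same route as the paper's proof in Appendix C: reduce the posterior comparison to a likelihood comparison via the uniform prior, write the likelihoods as $p_{\zb}(\xb \mp \bmu)$, and use that $p_{\zb}$ decreases in the norm to turn the comparison into $\sip{\xb}{\bmu}\geq 0$. Your extra remark on the measure-zero indifference set is a harmless refinement the paper glosses over.
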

\begin{proof}
Let us introduce some notation.  We denote by $\boldsymbol{X}, \boldsymbol{Z}, Y$ as random variables and by $\zb, \mathrm  \xb\in \R^d$ and $y \in \{-1,1\}$ as possible realizations of those random variables.  The Bayes-optimal classifier chooses a label for a feature $\xb \in \R^d$ by taking the maximum value of $\mathbb{P}(Y= y| \xb)$ over $y\in \{\pm 1\}$.   Thus, we can write the Bayes-optimal classifier $h_{\mathrm{Bayes}}(\mathbf x)$ as
\begin{equation*}
    h_{\mathrm{Bayes}} (\mathbf x)= \mathrm{argmax}_{y\in \{\pm 1\}} \mathbb{P}(Y=y |\mathbf  x).
\end{equation*}
Note that $\mathbb{P}(Y=1)=\mathbb{P}(Y=-1)=1/2$. 
Thus, by Bayes' theorem,
\begin{align*}
    \mathbb{P}(Y=y |\mathbf  x) &= \frac{ \mathbb{P}(\mathbf x | Y= y) \mathbb{P}(Y=y) }{\mathbb{P}(\mathbf x | Y=1) \mathbb{P}(Y=1) + \mathbb{P}(\mathbf x|Y=-1) \mathbb{P}(Y=-1)}\\
    &= \frac{ \mathbb{P}(\mathbf x | Y=y)}{\mathbb{P}(\mathbf x|Y=1) + \mathbb{P}(\mathbf x | Y=-1)}.
\end{align*}
Thus, we see that the Bayes-optimal classifier chooses the label for a feature $\mathbf x$ which maximizes the likelihood of observing $\mathbf x$:
\begin{equation*}
    h_{\mathrm{Bayes}}(\mathbf x) = \mathrm{argmax}_{y\in \{\pm 1\}} \mathbb{P}(\mathbf x | Y=y).
\end{equation*}
If we denote by $p_\zb(\cdot)$ as the probability density function of $\zb$, since $\xb|Y=y \sim \zb + y\bmu$, using the properties of the probability density function under linear transformations, we have
\[ \mathbb{P}(\mathbf x | Y=y) = p_{\zb}(\mathbf x - y \bmu).\]
Thus,
\begin{equation}\label{eq:bayes.opt.decision}
    h_{\mathrm{Bayes}}(\mathbf x) = \begin{cases}
    1 &\text{if }p_{\zb}(\mathbf x-\bmu) > p_{\zb}(\mathbf x+ \bmu),\\
    -1 &\text{if } p_{\zb}(\mathbf x-\bmu) \leq p_{\zb}(\mathbf x+ \bmu).
    \end{cases}
\end{equation}

By assumption, there exists a decreasing function $g:[0,\infty) \to [0,\infty)$ such that the density function of $\zb$ satisfies
\[ p_{\zb}(\mathbf  z) = g(\norm {\mathbf z}_2^2).\]
By~\eqref{eq:bayes.opt.decision}, the Bayes-optimal decision boundary is determined by comparing $p_{zb}(\mathbf x- \bmu)$ and $p_{\zb}(\mathbf x+\bmu)$.  We have,
\begin{align*}
    p_{\zb}(\mathbf x-\bmu) > p_{\zb}(\mathbf x+\bmu) &\overset{(i)}\iff g(\snorm{\mathbf x-\bmu}_2^2) > g(\snorm{\mathbf x+\bmu}_2^2)\\
    &\overset{(ii)}\iff \snorm{\mathbf x-\bmu}_2^2 \leq \snorm{\mathbf x+\bmu}^2 \\
    &\iff \snorm {\mathbf x}_2^2 + \snorm {\bmu}_2^2 - 2 \sip {\mathbf x}\bmu \leq  \snorm {\mathbf x}_2^2 + \snorm {\bmu}_2^2 + 2 \sip{\mathbf x}{\bmu} \\
    &\iff \sip{\mathbf x}{\bmu} \geq 0.
\end{align*}
Above, $(i)$ follows by using rotational symmetry of $\zb$, and $(ii)$ follows by using that $\zb$ is unimodal so $g$ is decreasing. 

\end{proof}

\section{Remaining Proofs}\label{sec:concentration}
\begin{lemma}\label{lemma:concentration.subexp}
If $(\xb, y)\sim \calD$ is from a $K$-sub-exponential mixture model with mean $\bmu$, then for any $\delta>0$, with probability at least $1-\delta$, for any $i\in [B]$ and $t\in [T]$,
\[ \snorm{\xit it}^2 \leq 2 \norm{\bmu}^2 + 2 d K^2 \log^2(dBT/\delta).\]
\end{lemma}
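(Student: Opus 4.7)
The plan is to decompose $\mathbf{x}_i^{(t)} = y_i^{(t)} \boldsymbol{\mu} + \mathbf{z}_i^{(t)}$ where $\mathbf{z}_i^{(t)}$ is the $K$-sub-exponential noise from Definition~\ref{assumption:generative.model}, and then bound $\|\mathbf{z}_i^{(t)}\|_2^2$ coordinate-wise using a union bound. By the triangle inequality together with the elementary inequality $(a+b)^2 \leq 2a^2 + 2b^2$, we immediately obtain
\[
\|\mathbf{x}_i^{(t)}\|_2^2 \leq 2\|\boldsymbol{\mu}\|_2^2 + 2\|\mathbf{z}_i^{(t)}\|_2^2,
\]
so it suffices to control $\|\mathbf{z}_i^{(t)}\|_2^2$ uniformly over $i \in [B]$ and $t \in [T]$.

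For each coordinate $j \in [d]$, taking $\bar{\mathbf{v}} = \mathbf{e}_j$ in Definition~\ref{def:concentration} gives the tail bound $\mathbb{P}(|(\mathbf{z}_i^{(t)})_j| \geq s) \leq \exp(-s/K)$. Setting $s = K \log(dBT/\delta)$ makes this probability at most $\delta/(dBT)$. A union bound over the $dBT$ pairs $(i,t,j)$ then guarantees that with probability at least $1-\delta$, simultaneously for all $i \in [B]$, $t \in [T]$, and $j \in [d]$,
\[
|(\mathbf{z}_i^{(t)})_j| \leq K \log(dBT/\delta).
\]
Squaring and summing over $j \in [d]$ yields $\|\mathbf{z}_i^{(t)}\|_2^2 \leq dK^2 \log^2(dBT/\delta)$ on this high-probability event, and substituting into the first display gives the claim.

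There is no real obstacle here: this is a standard sub-exponential tail plus union bound argument. The only minor subtlety is that the sub-exponential assumption is stated for one-dimensional marginals $\langle \bar{\mathbf{v}}, \mathbf{x} \rangle$ rather than directly for $\|\mathbf{z}\|_2$, which is why the proof goes through coordinate projections and pays an extra factor of $d$ (and a logarithmic factor from the union bound) rather than using a single-shot norm concentration inequality. This matches exactly the bound stated in the lemma.
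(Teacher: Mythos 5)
Your proof is correct and follows essentially the same route as the paper's: decompose $\xb_i^{(t)} = y_i^{(t)}\bmu + \zb_i^{(t)}$, apply the elementary inequality $\|a+b\|^2 \le 2\|a\|^2 + 2\|b\|^2$, and control $\|\zb_i^{(t)}\|^2$ via coordinate-wise sub-exponential tails with a union bound over all $dBT$ coordinate--sample pairs. The paper merely organizes the union bound in two stages (first over coordinates and $i\in[B]$ for fixed $t$, then over $t$), which yields the identical bound.
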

\begin{proof} 
  Since the $\zb_i^{(t)} $ are $K$-subexponential, we have that for each component $j\in [d]$, for any $\xi >0$,
\[ \P( [\zb_i^{(t)} ]_j^2 \geq \xi) \leq \exp(-\sqrt \xi / K).\]
Since we have the inclusion for $\rho>0$,
\[ \{ \snorm{\zb_i^{(t)} }^2 \geq \rho \} \subset \cup_{j=1}^d \{ [\zb_i]_j^2 > \rho/d \},\]
we have that for any $i$,
\[ \P(\snorm{\zb_i^{(t)} }^2 \geq \rho) \leq d \P([\zb_i^{(t)}]_j^2 \geq \rho/d) \leq d \exp\l( -\f{ \sqrt{\rho}}{K \sqrt d}\r),\]
where we have used the fact that $\zb$ is $K$-sub-exponential.  
By taking $\rho = d K^2 \log^2 (dB/\delta)$, we get that with probability at least $1-\delta$, for any $i\in [B]$ and fixed $t$,
\begin{equation} \nonumber 
    \snorm{\zb_i^{(t)} }^2 \leq d K^2 \log^2(dB/\delta).
\end{equation}
Using Young's inequality, this implies
\[ \snorm{\xit it}^2 \leq 2 \norm{\bmu}^2 + 2 d K^2 \log^2(dB/\delta).\]
Scaling $\delta\mapsto \delta/T$ and using a union bound completes the proof.
\end{proof}

\begin{lemma}\label{lemma:recursion.calculation}
Suppose that we have the recursion
\[ \Delta_t^2 \leq (1 - \eta/2 C_g) \Delta_{t-1}^2 + \f{\eta \eps}{8 C_g} + \f{ 2 C_d \eta^2}{\sigma^2},\text{for } t = 1,\ldots,T,\]
where $C_d C_g^2 \sigma^2 \geq 1$ and $\Delta_0\leq 2$.\footnote{Note that $C_dC_g^2 \sigma^2 \geq 1$ for $C_d,C_g$ as in the proof of Theorem~\ref{thm:selftrain.unsupervised}, and that $\Delta_0= \norm{\beta_0-\bar \mu} \leq \norm{\beta_0} + \norm{\bar \mu} = 2$.}  
Then, for $\eta = \eps/(16 C_d C_g \sigma^2)$ and $T \geq 32 C_d C_g^2 \sigma^2 \eps^{-1} \log(32 C_d C_g^2 \sigma^2 \eps^{-1})$, we have $\Delta_T^2 \leq \eps$.
\end{lemma}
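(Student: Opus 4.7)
The recursion is linear, so the plan is to unroll it and separately bound the geometric ``transient'' and the ``steady state'' terms. Write $\alpha := 1 - \eta/(2C_g) \in (0,1)$ and $c := \eta\eps/(8C_g) + 2C_d\eta^2/\sigma^2$, so the hypothesis reads $\Delta_t^2 \le \alpha \Delta_{t-1}^2 + c$. Iterating $T$ times and summing the geometric series gives
\[
\Delta_T^2 \;\le\; \alpha^T \Delta_0^2 \;+\; c\sum_{k=0}^{T-1}\alpha^k \;\le\; \alpha^T \Delta_0^2 \;+\; \frac{c}{1-\alpha} \;=\; \alpha^T \Delta_0^2 \;+\; \frac{2C_g c}{\eta}.
\]
The plan is then to show that each of the two terms on the right-hand side is at most $\eps/2$ under the stated choices of $\eta$ and $T$.

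For the steady-state term, I expand
\[
\frac{2C_g c}{\eta} \;=\; \frac{\eps}{4} \;+\; \frac{4C_d C_g \eta}{\sigma^2}.
\]
Substituting $\eta = \eps/(16 C_d C_g \sigma^2)$ turns the second summand into $\eps/(4\sigma^4)$, which is at most $\eps/4$ once one uses that $C_dC_g^2\sigma^2\ge 1$ forces $\sigma^4$ to be bounded below by $1/(C_d C_g^2)^2$; combined with the already-absorbed factor of $C_dC_g^2\sigma^2$ one checks $4 C_d C_g \eta/\sigma^2 \le \eps/4$, so the steady-state contribution is at most $\eps/2$ as desired.

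For the transient term, I use $\alpha^T \le \exp(-T\eta/(2C_g))$ together with $\Delta_0 \le 2$ (so $\Delta_0^2 \le 4$). Set $T^\star := 2C_g/\eta = 32 C_d C_g^2 \sigma^2/\eps$. Then I want
\[
4\exp(-T/T^\star) \;\le\; \eps/2, \qquad \text{i.e.,} \qquad T \;\ge\; T^\star \log(8/\eps).
\]
Because $C_dC_g^2\sigma^2 \ge 1$ and $\eps \in (0,1)$, we have $T^\star \ge 32/\eps \ge 8/\eps$, hence $\log T^\star \ge \log(8/\eps)$, and the hypothesis $T \ge T^\star \log T^\star$ implies $T \ge T^\star \log(8/\eps)$. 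Combining the two bounds gives $\Delta_T^2 \le \eps/2 + \eps/2 = \eps$, completing the plan.

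The computation is essentially routine arithmetic bookkeeping; the only place where one has to be slightly careful is the steady-state step, where the $4C_dC_g \eta/\sigma^2$ term must be dominated by $\eps/4$ using the normalization $C_dC_g^2\sigma^2 \ge 1$ rather than any direct lower bound on $\sigma$ alone. I do not expect any conceptual obstacle beyond keeping the constants straight.
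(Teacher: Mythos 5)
Your proposal follows essentially the same route as the paper: unroll the linear recursion, sum the geometric series to split the bound into a steady-state term $\eps/4 + 4C_dC_g\eta/\sigma^2$ and a transient term $\alpha^T\Delta_0^2$, then choose $T$ large enough to kill the transient. Your handling of the transient is a harmless variant (you use $\alpha^T\le \exp(-T\eta/(2C_g))$ and $\log T^\star \ge \log(8/\eps)$, the paper uses the identity $(1-x)^{x^{-1}\log(1/x)}\le x$ together with $C_dC_g^2\sigma^2\ge1$), and both give the same conclusion. The one spot where your write-up is not right as stated is the steady-state step: substituting $\eta=\eps/(16C_dC_g\sigma^2)$ gives $4C_dC_g\eta/\sigma^2=\eps/(4\sigma^4)$, and the inequality $\eps/(4\sigma^4)\le\eps/4$ requires $\sigma\ge 1$; it does \emph{not} follow from $C_dC_g^2\sigma^2\ge1$, which only yields $\sigma^4\ge (C_dC_g^2)^{-2}$ (take, e.g., $C_d$ large and $\sigma<1$ to see the claimed deduction fail). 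That hypothesis is in fact what the paper uses for the \emph{transient} term ($\Delta_0^2\eps/(32C_dC_g^2\sigma^2)\le\eps/8$), not for the steady-state term; the paper performs the same substitution silently, implicitly relying on $\sigma\ge1$ (consistent with $\sigma\ge R\vee\norm{\bmu}$ in the intended application). So your argument matches the paper's modulo this shared implicit assumption, but you should either state $\sigma\ge1$ explicitly or drop the incorrect derivation of that inequality from $C_dC_g^2\sigma^2\ge1$.
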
 
\begin{proof} 
We unroll the recursion and use the geometric series formula to get
\begin{align*}
    \Delta_T^2 &\leq  \left(1 - \frac{ \eta}{2 C_g}\right)^T \Delta_0^2 + \l( \f{ \eta \eps}{8C_g} + \f{ 2C_d \eta^2}{\sigma^2} \r) \summm i 0 {T-1} \left(1 - \frac{ \eta}{2 C_g}\right)^i \\
    &= \left(1 - \frac{ \eta}{2 C_g}\right)^T \Delta_0^2 + \l( \f{ \eta \eps}{8C_g} + \f{ 2C_d \eta^2}{\sigma^2} \r) \frac{1 -  \left(1 - \frac{ \eta}{2 C_g}\right)^T}{\eta / 2 C_g} \\
    &\leq  \left(1 - \frac{ \eta}{2 C_g}\right)^T \Delta_0^2 + \eps/4 + 4 C_d C_g \eta/\sigma^2.
\end{align*}
Substituting the value for $\eta =  \varepsilon / (16 C_d C_g\sigma^2)$, we get
\begin{align*}
    \Delta_T^2 \leq \left( 1 - \frac{ \varepsilon}{32 C_d C_g^2 \sigma^2}\right)^T \Delta_0^2 + \eps/2.
\end{align*}
Thus, for $T \geq 32 C_d C_g^2 \sigma^2 \varepsilon^{-1} \log(32C_d C_g^2 \sigma^2 \varepsilon^{-1})$ and using the identity $(1-x)^{x^{-1} \log(1/x)} \leq x$ for $x\in (0,1)$, we get that (using $\Delta_0 \leq 2$)
\[ \Delta_T^2 \leq \f{\Delta_0^2 \eps}{32 C_d C_g^2 \sigma^2} + \eps/2 \leq \eps.\]
\end{proof}


\bibliographystyle{ims}
\bibliography{references}

\end{document}